\documentclass{article}
\usepackage{iclr2023_conference,times}


\usepackage{amsmath,amsfonts,bm}









\def\eqref#1{equation~\ref{#1}}









\def\1{\bm{1}}










\DeclareMathAlphabet{\mathsfit}{\encodingdefault}{\sfdefault}{m}{sl}
\SetMathAlphabet{\mathsfit}{bold}{\encodingdefault}{\sfdefault}{bx}{n}













\usepackage{hyperref}
\usepackage{url}
\usepackage[ruled]{algorithm2e}
\usepackage{amsfonts,amsmath,amsthm,amssymb}       
\usepackage{xcolor}        
\usepackage{subcaption}
\usepackage{graphicx}
\usepackage{multirow}
\usepackage{listings}
\usepackage{enumitem}
\usepackage{booktabs}       
\usepackage{pifont}

\title{On the duality between contrastive and non-contrastive self-supervised learning}

\author{%
Quentin Garrido$^{1,2}$\thanks{Correspondence to \texttt{garridoq@meta.com}} \And Yubei Chen$^{1,5}$ \And Adrien Bardes$^{1,3}$ \AND
 \hspace{2cm} 
 Laurent Najman$^{2}$  \And Yann LeCun$^{1,4,5}$ 
 \hspace{2cm}
 \And
  $^1$\normalfont Meta AI - FAIR\\
  $^2$\normalfont Univ Gustave Eiffel, CNRS, LIGM, F-77454 Marne-la-Vallée, France\\
  $^3$\normalfont Inria, \'{E}cole normale sup\'{e}rieure, CNRS, PSL Research University \\
  $^4$\normalfont Courant Institute, New York University\\
  $^5$\normalfont Center for Data Science, New York University\\
}

\newcommand{\cmark}{\ding{51}}
\newcommand{\xmark}{\ding{55}}
\iclrfinalcopy 

\DeclareMathOperator*{\diag}{\text{diag}}
\begin{document}

\theoremstyle{definition}
\newtheorem{definition}{Definition}[section]
\newtheorem{theorem}{Theorem}[section]
\newtheorem{corollary}{Corollary}[theorem]
\newtheorem{lemma}[theorem]{Lemma}
\newtheorem{proposition}[theorem]{Proposition}

\maketitle

\begin{abstract}
Recent approaches in self-supervised learning of image representations can be categorized into different families of methods and, in particular, can be divided into contrastive and non-contrastive approaches. While differences between the two families have been thoroughly discussed to motivate new approaches, we focus more on the theoretical similarities between them. By designing contrastive and covariance based non-contrastive criteria that can be related algebraically and shown to be equivalent under limited assumptions, we show how close those families can be. We further study popular methods and introduce variations of them, allowing us to relate this theoretical result to current practices and show the influence (or lack thereof) of design choices on downstream performance. Motivated by our equivalence result, we investigate the low performance of SimCLR and show how it can match VICReg's with careful hyperparameter tuning, improving significantly over known baselines. We also challenge the popular assumption that non-contrastive methods need large output dimensions. Our theoretical and quantitative results suggest that the numerical gaps between contrastive and non-contrastive methods in certain regimes can be closed given better network design choices and hyperparameter tuning. The evidence shows that unifying different SOTA methods is an important direction to build a better understanding of self-supervised learning.
\end{abstract}

\section{Introduction}
Self-supervised learning (SSL) of image representations has shown significant progress in the last few years~\citep{chen2020simple,he2020moco, chen2020mocov2, grill2020byol, lee2021cbyol, caron2020swav, zbontar2021barlow, bardes2021vicreg, tomasev2022relicv2, caron2021dino, chen2021mocov3, li2022esvit, zhou2022ibot, zhou2022mugs,haochen2021provable}, approaching, and sometime even surpassing, the performance of supervised baselines on many downstream tasks. Most recent approaches are based on the joint-embedding framework with a siamese network architecture~\citep{bromley1994siamese} which are divided into two main categories, contrastive and non-contrastive methods. Contrastive methods bring together embeddings of different views of the same image while pushing away the embeddings from different images. Non-contrastive methods also attract embeddings of views from the same image but remove the need for explicit negative pairs, either by architectural design~\citep{grill2020byol,chen2020simsiam} or by regularization of the covariance of the embeddings~\citep{zbontar2021barlow,bardes2021vicreg,li2022neural}.

While contrastive and non-contrastive approaches seem very different and have been described as such~\citep{zbontar2021barlow,bardes2021vicreg,ermolov2021whitening,grill2020byol}, we propose to take a closer look at the similarities between the two, both from a theoretical and empirical point of view and show that there exists a close relationship between them. We focus on covariance regularization-based non-contrastive methods~\citep{zbontar2021barlow, ermolov2021whitening, bardes2021vicreg} and demonstrate that these methods can be seen as contrastive between the dimensions of the embeddings instead of contrastive between the samples. We, therefore, introduce the term {\em dimension-contrastive} methods which we believe is better suited for them, and refer to the original contrastive methods as {\em sample-contrastive} methods. To show the similarities between the two, we define contrastive and non-contrastive criteria based on the Frobenius norm of the Gram and covariance matrices of the embeddings, respectively, and show the equivalence between the two under assumptions on the normalization of the embeddings. We then relate popular methods to these criteria, highlighting the links between them and further motivating the use of the \textit{sample-contrastive} and \textit{dimension-contrastive} nomenclature.
Finally, we introduce variations of an existing dimension-contrastive method (VICReg), and a sample-contrastive one (SimCLR). This allows us to verify this equivalence empirically and improve both VICReg and SimCLR through this lens.\\ Our contributions can be summarized as follows:
\begin{itemize}
    \item We make a significant effort to unify several SOTA sample-contrastive and dimension-contrastive methods and show that empirical performance gaps can be closed completely. By pinpointing its source, we consolidate our understanding of SSL methods.
    \item We introduce two criteria that serve as representatives for sample- and dimension-contrastive methods. We show that they are equivalent for doubly normalized embeddings, and then relate popular methods to them, highlighting their theoretical similarities.
    \item We introduce methods that interpolate between VICReg and SimCLR to study the practical impact of precise components of their loss functions. This allows us to validate empirically our theoretical result by isolating the sample- and dimension-contrastive nature of methods.
    \item Motivated by the equivalence, we show that advantages attributed to one family can be transferred to the other. We  improve SimCLR's performance to match VICReg's, and improve VICReg to make it as robust to embedding dimension as SimCLR.
\end{itemize}
\vspace{-1em}
\section{Related work}
\textbf{Sample-contrastive methods.} In self-supervised learning of image representations, contrastive methods pull together embeddings of distorted views of a single image while pushing away embeddings coming from different images. Many works in this direction have recently flourished~\citep{chen2020simple,he2020moco,chen2020mocov2,chen2021mocov3,yeh2021decoupled}, most of them using the InfoNCE criterion~\citep{oord2018infonce}, except~\cite{haochen2021provable}, that uses squared similarities between the samples. Clustering-based methods~\citep{caron2018clustering,caron2020swav,caron2021dino} can be seen as contrastive between prototypes, or clusters, instead of samples.

\textbf{Non-contrastive methods.} Recently, methods that deviate from contrastive learning have emerged and eliminate the use of negative samples in different ways. Distillation-based methods such as BYOL~\citep{grill2020byol}, SimSiam~\citep{chen2020simsiam} or DINO~\citep{caron2021dino} use architectural tricks inspired by distillation to avoid the collapse problem. Information maximization methods~\citep{bardes2021vicreg,zbontar2021barlow,ermolov2021whitening,li2022neural} maximize the informational content of the representations and have also had significant success. They rely on regularizing the empirical covariance matrix of the embeddings so that their informational content is maximized. Our study of dimension-contrastive learning focuses on these covariance-based methods.

\textbf{Understanding contrastive and non-contrastive learning.} Recent works tackle the task of understanding and characterizing methods. The fact that a method like SimSiam does not collapse is studied in~\cite{tian2021understanding}. The loss landscape of SimSiam is also compared to SimCLR's in~\cite{pokle2022contrasting}, which shows that it learns bad minima. In~\cite{wang2020understanding}, the optimal solutions of the InfoNCE criterion are characterized, giving a better understanding of the embedding distributions. A spectral graph point of view is taken in~\cite{haochen2022beyond,haochen2021provable,shen2022connect} to analyze self-supervised learning methods. Practical properties of contrastive methods have been studied in~\cite{chen2021intriguing}. In~\cite{huang2021towards} Barlow twins criterion is shown to be related to an upper bound of a sample-contrastive criterion. We go further and exactly quantify the gap between the criterion, which allows us to use the link between methods in practical scenarios. Barlow Twins' criterion is also linked to HSIC in~\cite{tsai2021note}. The use of data augmentation in sample-contrastive learning has also been studied from a theoretical standpoint in~\cite{huang2021towards,wen2021toward}.
In~\cite{balestriero2022spectral}, popular self-supervised methods are linked to spectral methods, providing a unifying framework that highlights their differences. The gradient of various methods is also studied in~\cite{tao2021unigrad}, where they show links and differences between them. In~\cite{lee2021predicting}, a link is made between CCA and SCL by showing similar error bounds on linear classifiers.

\section{Equivalence of the contrastive and non-contrastive criterion}

While our results only depend on the embeddings and not the architecture used to obtain them, nor do they depend on the data modality, all the studied methods are placed in a joint embedding framework and applied on images. Given a dataset $\mathcal{D}$ with individual datum $d_i \in \mathbb{R}^{c \times h \times w}$, this datum is augmented to obtain two views $x_i$ and $x'_i$. These two views are then each fed through a pair of neural networks $f_\theta$ and $f'_{\theta'}$. We obtain the {\em representations} $f_\theta(x_i)$ and  $f'_{\theta'}(x'_i)$, which are fed through a pair of projectors $p_\theta$ and $p'_{\theta'}$ such that {\em embeddings} are defined as $p_\theta(f_\theta(x_i))$ and $p'_{\theta'}(f'_{\theta'}(x'_i))$. We denote the matrices of embeddings $\mathcal{K}$ and $\mathcal{K}'$ such that $\mathcal{K}_{\cdot,i} = p_\theta(f_\theta(x_i))$, and similarly for $\mathcal{K}'$, we have $\mathcal{K} \in \mathbb{R}^{M\times N}$, with $M$ the embedding size and $N$ the batch size, and similarly for $\mathcal{K}'$. These embedding matrices are the primary object of our study. In practice, we use $f_\theta = f'_{\theta'}$ and $p_\theta = p'_{\theta'}$. While most self-supervised learning approaches use positive pairs $(x_i,x'_i)$ and negative pairs $\{\forall j, j\neq i,(x_i,x_j)\} \bigcup \{\forall j, j\neq i,(x_i,x'_j)\}$ for a given view $x_i$, we focus on the simpler scenario where negative samples are just $\{\forall j, j\neq i,(x_i,x_j)\}$. There is no fundamental difference when $\theta = \theta'$ and when the same distribution of augmentations is used for both branches, and we therefore make these simplifications to make the analysis less convoluted.

We start by defining precisely which contrastive and non-contrastive criteria we will be studying throughout this work. These criteria will be used to classify methods in two classes, {\em sample-contrastive}, which corresponds to what is traditionally thought of as contrastive, and {\em dimension-contrastive}, which will encompass non-contrastive methods relying on regularizing the covariance matrix of embeddings. 

\begin{definition}
Given a matrix $A \in \mathbb{R}^{n\times n}$. We define its {\em extracted diagonal} $ \diag\left(A\right) \in \mathbb{R}^{n\times n}$ as:
\begin{equation}
    \diag\left(A\right)_{i,j} = 
\begin{cases}
    A_{i,i},& \text{if } i = j\\
    0,              & \text{otherwise}.
\end{cases}
\end{equation}
\end{definition}

\begin{definition}
A method is said to be {\em sample-contrastive} if it minimizes the contrastive criterion $L_{c} = \|\mathcal{K}^T\mathcal{K} - \diag(\mathcal{K}^T\mathcal{K})\|_F^2 $. Similarly, a method is said to be {\em dimension-contrastive} if it minimizes the non-contrastive criterion \mbox{$L_{nc} = \|\mathcal{K}\mathcal{K}^T - \diag(\mathcal{K}\mathcal{K}^T)\|_F^2 $}.
\end{definition}

The {\em sample-contrastive} criterion can be seen as penalizing the similarity between different pairs of images, whereas the  {\em dimension-contrastive} criterion can be seen as penalizing the off-diagonal terms of the covariance matrix of the embeddings. These criteria respectively try to make pairs of samples or dimensions orthogonal.

\textbf{Invariance criterion.} While $L_c$ and $L_{nc}$ focus on regularizing the embedding space, they are not optimized alone. They are usually combined with an invariance criterion that aims at producing the same embedding for two views of the same image. As such, a complete self-supervised loss would look like $L_{SSL} = L_{inv} + L_{reg}$ with $L_{reg}$ being either $L_{c}$ or $L_{nc}$ for our prototypical sample-contrastive and dimension-contrastive methods. This invariance criterion is generally a similarity measure, such as the cosine similarity or the mean squared error of the difference between a positive pair of samples. Both are equivalent from an optimization point of view if using normalized embeddings, hence we focus on the regularization part which is the source of differences between these methods.

\begin{proposition}\label{prop:distrib-dot-infonce}
Considering an infinite amount of available negative samples, SimCLR and DCL's criteria lead to embeddings where for negative pairs $(x,x^-)\in \mathbb{R}^M$ we have
\begin{equation}
    \mathbb{E}\left[ x^T x^- \right] = 0 \quad\text{and}\quad \text{Var}\left[ x^T x^- \right] = \frac{1}{M}.
\end{equation}
\end{proposition}
SimCLR and DCL cannot be easily linked to $L_c$ since they rely on cosine similarities instead of their square or absolute value. Indeed, while $L_c$ aims at making pairs of embeddings or dimensions orthogonal, SimCLR and DCL's criteria go a step further and aim at making them opposite. Both cannot be satisfied perfectly in practice, as we would need as many dimensions as samples for $L_c$ to have all negative pairs be orthogonal, and more than two vectors cannot be pairwise opposite for SimCLR and DCL's criterion.
Nonetheless, as shown by Proposition~\ref{prop:distrib-dot-infonce}, SimCLR and DCL's criteria will lead to dot products of negative pairs with a null mean, which is exactly the aim of $L_c$. This shows that while the original formulations of DCL and SimCLR do not fit perfectly into our theoretical framework, they will still lead to results similar to other methods that we study. In order to complement this result, we introduce SimCLR-sq and SimCLR-abs as variations of SimCLR, which respectively use square or absolute values of cosine similarities. We define DCL-sq and DCL-abs similarly. We provide a study of SimCLR-sq and SimCLR-abs in supplementary section~\ref{sec:sim-choice}, where we compare them to SimCLR. The main conclusion is that the distribution of off-diagonal terms of the Gram matrix is similar between all studied methods, with a high concentration of values around zero, as predicted by Proposition~\ref{prop:distrib-dot-infonce}. We also see that changing SimCLR into these variations does not impact performance. We even see a small increase in top-1 accuracy on ImageNet~\citep{deng2009imagenet} with linear evaluation when using SimCLR-abs, where we reach $68.71$\% top-1 accuracy, compared to $68.61$\% with our improved reproduction of SimCLR. Both of these theoretical and practical arguments reinforce the proximity of SimCLR to our framework.

\begin{proposition}\label{prop:categories}
  SimCLR-abs/sq, DCL-sq/abs, and Spectral Contrastive Loss~\citep{haochen2021provable} are sample-contrastive methods. Barlow Twins~\citep{zbontar2021barlow}, VICReg~\citep{bardes2021vicreg} and TCR~\citep{li2022neural} are dimension-contrastive methods. 
\end{proposition}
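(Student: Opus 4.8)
The plan is to treat the two families separately and, within each, to go through the listed methods one at a time, in every case (i) writing the method's objective as an invariance term plus a regularization term, (ii) expressing the regularization term through the off-diagonal entries of the Gram matrix $\mathcal{K}^T\mathcal{K}$ (for the sample-contrastive claims) or of the covariance/Gram matrix $\mathcal{K}\mathcal{K}^T$ of the appropriately normalized embeddings (for the dimension-contrastive claims), and (iii) showing the regularization term is minimized exactly when those off-diagonal entries vanish, which is exactly when $L_c$ (resp. $L_{nc}$) is minimized. I would isolate the underlying reduction as a short lemma: if a regularizer equals $\phi\big(\{|(\mathcal{K}^T\mathcal{K})_{ij}|\}_{i\neq j}\big)$ for some $\phi$ strictly increasing in each coordinate, then on any domain over which the all-off-diagonal-zero Gram matrix is attainable its minimizers coincide with those of $L_c$, since $\phi$ and $\sum_{i\neq j}(\cdot)_{ij}^2$ are then both uniquely minimized over the attainable off-diagonal vectors at the origin; the dual statement holds for $\mathcal{K}\mathcal{K}^T$ and $L_{nc}$.

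On the sample-contrastive side, Spectral Contrastive Loss is immediate: its repulsive term $\mathbb{E}_{x,x^-}\big[(x^Tx^-)^2\big]$ has empirical value $\tfrac{1}{N(N-1)}\sum_{i\neq j}(\mathcal{K}^T\mathcal{K})_{ij}^2=\tfrac{1}{N(N-1)}L_c$, a positive multiple of $L_c$. For DCL-sq and SimCLR-sq the repulsive part of the column-normalized InfoNCE objective is $\sum_i\log\big(e^{s_i/\tau}+\sum_{j\neq i}e^{(\mathcal{K}^T\mathcal{K})_{ij}^2/\tau}\big)$, with $s_i$ the positive-pair term; since $t\mapsto t^2$ is nonnegative, vanishes only at $0$, and $\log$ and $\exp$ are increasing, this is for fixed $s_i$ a strictly increasing function of each $(\mathcal{K}^T\mathcal{K})_{ij}^2$, and the $s_i$ are handled by the invariance term, so the lemma applies; DCL-abs and SimCLR-abs are identical with $t^2$ replaced by $|t|$. (For the original SimCLR and DCL the exponent carries the raw similarity $(\mathcal{K}^T\mathcal{K})_{ij}$ rather than its square or modulus, so the repulsive term pushes negative pairs toward anti-alignment instead of orthogonality; the resulting discrepancy with $L_c$ is precisely what Proposition~\ref{prop:distrib-dot-infonce} quantifies, which is why the squared/absolute variants, not the originals, appear here.)

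On the dimension-contrastive side, Barlow Twins' redundancy-reduction term $\sum_{i\neq j}C_{ij}^2$ with $C$ the cross-correlation matrix of the batch-normalized embeddings is, under the standing simplification $\mathcal{K}=\mathcal{K}'$, the squared off-diagonal Frobenius norm of the normalized covariance, hence $L_{nc}$ up to a positive constant; VICReg's covariance term is by definition $\tfrac1M\sum_{i\neq j}[\Cov(\mathcal{K})]_{ij}^2$, i.e. $L_{nc}$ on the centered embeddings up to the factor $1/M$. TCR is the one method whose regularizer is not a coordinatewise function of the off-diagonals, and I expect it to be the main obstacle: it maximizes the total coding rate $\tfrac12\log\det\!\big(I+\alpha\,\mathcal{K}\mathcal{K}^T\big)$. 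Here I would invoke Hadamard's inequality, $\det\!\big(I+\alpha\,\mathcal{K}\mathcal{K}^T\big)\le\prod_i\big(1+\alpha(\mathcal{K}\mathcal{K}^T)_{ii}\big)$ with equality iff $\mathcal{K}\mathcal{K}^T$ is diagonal, together with the fact that TCR's normalization fixes the diagonal $(\mathcal{K}\mathcal{K}^T)_{ii}$, so the right-hand side is constant and the coding rate is maximized exactly when $\mathcal{K}\mathcal{K}^T$ is diagonal, i.e. when $L_{nc}=0$; thus maximizing TCR's regularizer minimizes $L_{nc}$. What is left is bookkeeping — checking that under each method's normalization convention (column normalization for the cosine-similarity methods; batch/feature normalization and centering for Barlow Twins, VICReg and TCR) the identities above are literal and the additive or multiplicative constants are harmless — which I would gather in a closing remark, alongside the caveat, already flagged in the surrounding text, that these equivalences are exact precisely in the over-parameterized regime where the relevant off-diagonal entries can be made to vanish simultaneously, and approximate otherwise.
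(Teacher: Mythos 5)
Your handling of Spectral Contrastive Loss, Barlow Twins, VICReg, and the squared/absolute SimCLR and DCL variants is sound and matches the paper's proof in substance; your coordinatewise-monotonicity lemma is in fact a cleaner unification than the paper's argument, which uses the LogSumExp--max bounds for DCL-sq/abs and then falls back on a more informal ``all negative-pair similarities must be minimized'' argument for SimCLR-sq/abs because the positive pair sits in the denominator. (A minor slip: DCL removes the positive-pair term from the denominator, so the repulsive term you display is SimCLR's rather than DCL's; this does not affect the monotonicity argument. Like the paper, you only establish agreement of minimizers, with the quantitative comparison deferred to Theorem~\ref{thm:equivalence}.)

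The genuine problem is the TCR step. You claim that TCR's normalization fixes the diagonal of $\mathcal{K}\mathcal{K}^T$, so that the right-hand side of Hadamard's inequality is constant and maximizing $\log\det\left(I+\alpha\,\mathcal{K}\mathcal{K}^T\right)$ forces $\mathcal{K}\mathcal{K}^T$ to be diagonal. But TCR normalizes the embeddings, i.e.\ the columns of $\mathcal{K}$, whereas the diagonal entries of $\mathcal{K}\mathcal{K}^T$ are the squared row norms $\|\mathcal{K}_{j,\cdot}\|_2^2$; column normalization fixes only their sum $\Tr(\mathcal{K}\mathcal{K}^T)=\sum_i\|\mathcal{K}_{\cdot,i}\|_2^2$, not the individual entries. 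Hence the Hadamard bound is not constant over the feasible set, and attaining equality in Hadamard is no longer equivalent to maximizing the determinant, so the step fails as written. It is repairable: with the trace fixed, write $\log\det\left(I+\alpha\,\mathcal{K}\mathcal{K}^T\right)=\sum_j\log\left(1+\alpha\lambda_j\right)$ over the eigenvalues $\lambda_j$; by strict concavity of $t\mapsto\log(1+\alpha t)$ this is maximized exactly when all eigenvalues are equal, i.e.\ when $\mathcal{K}\mathcal{K}^T$ is a multiple of the identity (hence diagonal and $L_{nc}=0$, attainable when $M\leq N$); alternatively combine Hadamard with the observation that, at fixed trace, $\prod_j\left(1+\alpha(\mathcal{K}\mathcal{K}^T)_{jj}\right)$ is largest at equal diagonal entries. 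For comparison, the paper does not prove this step at all: it simply cites the TCR paper for the fact that the criterion leads to a diagonal covariance matrix, so your corrected argument would actually be more self-contained than the original.
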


Even though they do not fit perfectly in our framework, we discuss how methods such as DINO, SimSiam, or MoCo can be linked to $L_c$ and $L_{nc}$  in supplementary section~\ref{sec:other-methods}.
From proposition~\ref{prop:categories} we can see that sample-contrastive and dimension-contrastive methods can respectively be linked together by $L_c$ and $L_{nc}$. This alone is not enough to show the link between those two families of methods and we will now discuss the link between $L_c$ and $L_{nc}$ to show how close those families are.

\begin{theorem}\label{thm:equivalence}
The sample-contrastive and dimension-contrastive criteria $L_c$ and $L_{nc}$ are equivalent up to row and column normalization of the embedding matrix $\mathcal{K}$. Consider a batch size of $N$ and an embedding dimension of $M$. We have:
\begin{equation}
L_{nc} + \sum_{j=1}^M  \|\mathcal{K}_{j,\cdot} \|_2^4 = L_{c} + \sum_{i=1}^N \|\mathcal{K}_{\cdot,i} \|_2^4.
\end{equation}
\vspace{-1em}
\end{theorem}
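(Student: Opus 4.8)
The plan is to show that each side of the claimed identity is nothing but the squared Frobenius norm of a Gram-type matrix built from $\mathcal{K}$, and then to invoke the elementary fact that $\mathcal{K}^T\mathcal{K}$ and $\mathcal{K}\mathcal{K}^T$ have the same Frobenius norm. So the whole statement collapses to a two-line computation once the right bookkeeping identity is isolated.

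First I would record the observation that, for any square matrix $A$, the extracted diagonal $\diag(A)$ of Definition~1 agrees with $A$ on the diagonal and vanishes off it, so that $A-\diag(A)$ is exactly the off-diagonal part of $A$ and hence $\|A-\diag(A)\|_F^2 = \|A\|_F^2 - \sum_i A_{ii}^2$. Applying this to $A=\mathcal{K}^T\mathcal{K}\in\mathbb{R}^{N\times N}$, and using that its $i$-th diagonal entry is $(\mathcal{K}^T\mathcal{K})_{ii} = \sum_{j=1}^M \mathcal{K}_{ji}^2 = \|\mathcal{K}_{\cdot,i}\|_2^2$, gives
\[
L_c \;=\; \|\mathcal{K}^T\mathcal{K}\|_F^2 - \sum_{i=1}^N \|\mathcal{K}_{\cdot,i}\|_2^4 ,
\qquad\text{i.e.}\qquad
L_c + \sum_{i=1}^N \|\mathcal{K}_{\cdot,i}\|_2^4 = \|\mathcal{K}^T\mathcal{K}\|_F^2 .
\]
Symmetrically, applying the same identity to $A=\mathcal{K}\mathcal{K}^T\in\mathbb{R}^{M\times M}$, whose $j$-th diagonal entry is $(\mathcal{K}\mathcal{K}^T)_{jj} = \sum_{i=1}^N \mathcal{K}_{ji}^2 = \|\mathcal{K}_{j,\cdot}\|_2^2$, gives $L_{nc} + \sum_{j=1}^M \|\mathcal{K}_{j,\cdot}\|_2^4 = \|\mathcal{K}\mathcal{K}^T\|_F^2$.

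It then remains to see that $\|\mathcal{K}^T\mathcal{K}\|_F^2 = \|\mathcal{K}\mathcal{K}^T\|_F^2$, which I would get by writing the Frobenius norm as a trace and using cyclicity: $\|\mathcal{K}^T\mathcal{K}\|_F^2 = \Tr\!\big(\mathcal{K}^T\mathcal{K}\mathcal{K}^T\mathcal{K}\big) = \Tr\!\big(\mathcal{K}\mathcal{K}^T\mathcal{K}\mathcal{K}^T\big) = \|\mathcal{K}\mathcal{K}^T\|_F^2$ (equivalently, both expand entrywise to the same sum $\sum_{i,k,j,l}\mathcal{K}_{ji}\mathcal{K}_{jk}\mathcal{K}_{li}\mathcal{K}_{lk}$). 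Chaining the three displays yields $L_{nc} + \sum_{j=1}^M \|\mathcal{K}_{j,\cdot}\|_2^4 = L_c + \sum_{i=1}^N \|\mathcal{K}_{\cdot,i}\|_2^4$, as required.

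There is no real obstacle here: the argument is purely algebraic. The only place to be careful is the indexing convention — keeping straight that $\mathcal{K}_{\cdot,i}\in\mathbb{R}^M$ is a column (one embedding) while $\mathcal{K}_{j,\cdot}\in\mathbb{R}^N$ is a row (one coordinate across the batch), and matching each to the correct diagonal, $(\mathcal{K}^T\mathcal{K})_{ii}$ versus $(\mathcal{K}\mathcal{K}^T)_{jj}$. The one conceptual ingredient is $\|\mathcal{K}^T\mathcal{K}\|_F = \|\mathcal{K}\mathcal{K}^T\|_F$: the Gram matrix of the columns and the Gram matrix of the rows of $\mathcal{K}$ carry the same total squared mass, so they differ only in their diagonals, and those diagonals are precisely the squared column norms and squared row norms appearing in the statement.
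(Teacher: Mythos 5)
Your proposal is correct and follows essentially the same route as the paper's proof: both decompose each criterion as the full squared Frobenius norm of the Gram/covariance matrix minus the sum of squared diagonal entries (the fourth powers of column/row norms), and both rest on the identity $\|\mathcal{K}^T\mathcal{K}\|_F^2 = \|\mathcal{K}\mathcal{K}^T\|_F^2$ obtained via cyclicity of the trace. Your bookkeeping of which diagonal gives column norms versus row norms matches the paper's convention $\mathcal{K}\in\mathbb{R}^{M\times N}$, so nothing further is needed.
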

Theorem~\ref{thm:equivalence} is similar to lemma~3.2 from \cite{le2011ica}, where we consider matrices that are not doubly stochastic. It is worth noting that our result does not rely on any assumption about the embeddings themselves. A similar result was also used recently in~\cite{haochen2022beyond}, where they relate the spectral contrastive loss to $L_{nc}$.\\
The proof of theorem~\ref{thm:equivalence} hinges on the fact that the squared Frobenius norm of the Gram and Covariance matrix of the embeddings are equal, i.e., $\|\mathcal{K}^T\mathcal{K}\|_F^2 = \|\mathcal{K}\mathcal{K}^T\|_F^2$. This means that penalizing all the terms of the Gram matrix (i.e., pairwise similarities) is the same as penalizing all of the terms of the Covariance matrix. While this gives an intuition for the similarity between the contrastive and non-contrastive criteria, it is not as representative of the criteria used in practice as $L_c$ and $L_{nc}$ are.\\
While theorem~\ref{thm:equivalence} shows that sample-contrastive and dimension-contrastive approaches minimize similar criteria, for none of these methods can we conclude that both criteria can be used interchangeably. However, if both rows and columns of $\mathcal{K}$ were L2 normalized, we would have $L_{nc} = L_{c} + N - M$. In this case, both criteria would be equivalent from an optimization point of view, and we could conclude that sample-contrastive and dimension-contrastive methods are all minimizing the same criterion.

\textbf{Influence of normalization.}
The difference between the two criteria then lies in the embedding matrix row and column norms, and most approaches do normalize it in one direction.
Since SimCLR relies on the cosine distance as a similarity measure between embeddings, we can effectively say that it uses normalized embeddings. Similarly, Spectral Contrastive Loss projects the embeddings on a ball of radius $\sqrt{\mu}$, with $\mu$ a tuned parameter, meaning that the embeddings are normalized before the computation of the loss function.\\
Barlow Twins normalizes dimensions such that they have a null mean and unit variance, so all dimensions will have a norm of $\sqrt{N}$. VICReg takes a similar approach where dimensions are centered, but their variance is regularized by the variance criterion. This is similar to what is done for Barlow Twins and thus leads to dimensions with constant norm. However, for TCR, the embeddings are normalized and not the dimensions, contrasting with other dimension-contrastive methods.\\
One of the main differences between normalizing embeddings or dimensions is that in the former case, embeddings are projected on a $M-1$ dimensional hypersphere, and in the latter, they are not constrained on a particular manifold; instead, their spread in the ambient space is limited.

Nonetheless, a constraint on the norm of the embeddings also constrains the norm of the dimensions indirectly, and vice versa, as illustrated in lemma~\ref{lem:bounds}.

\begin{lemma}\label{lem:bounds}
    If embeddings are normalized such that $\forall i,\; \|\mathcal{K}_{\cdot,i}\|_2 = a$ we have 
    \begin{equation}
      \frac{N^2}{M}a^4 \leq \sum_{j=1}^M\|\mathcal{K}_{j,\cdot} \|_2^4 \leq N^2 a^4.
    \end{equation}
    Conversely, if dimensions are normalized such that $\forall j,\; \|\mathcal{K}_{j,\cdot}\|_2 = a$ we have 
    \begin{equation}
      \frac{M^2}{N}a^4 \leq \sum_{i=1}^N\|\mathcal{K}_{\cdot,i} \|_2^4 \leq M^2 a^4.
    \end{equation}
\end{lemma}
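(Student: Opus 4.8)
The plan is to reduce everything to an elementary inequality about nonnegative reals with a fixed sum. First I would record the ``double counting'' identity for the squared Frobenius norm: for any $\mathcal{K}\in\mathbb{R}^{M\times N}$,
\[
\sum_{i=1}^N \|\mathcal{K}_{\cdot,i}\|_2^2 \;=\; \sum_{i,j}\mathcal{K}_{j,i}^2 \;=\; \sum_{j=1}^M \|\mathcal{K}_{j,\cdot}\|_2^2 .
\]
Under the hypothesis $\|\mathcal{K}_{\cdot,i}\|_2 = a$ for all $i$, the left-hand side equals $Na^2$, so the row norms satisfy $\sum_{j=1}^M \|\mathcal{K}_{j,\cdot}\|_2^2 = Na^2$.

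Next I would set $r_j := \|\mathcal{K}_{j,\cdot}\|_2^2 \ge 0$, so that $\sum_{j=1}^M r_j = Na^2$ and the quantity to be bounded is $\sum_{j=1}^M r_j^2$. The lower bound is the Cauchy--Schwarz (equivalently, power-mean) inequality $\sum_{j=1}^M r_j^2 \ge \tfrac{1}{M}\bigl(\sum_{j=1}^M r_j\bigr)^2 = \tfrac{N^2}{M}a^4$. The upper bound is the equally elementary fact that for nonnegative reals the sum of squares is at most the square of the sum, $\sum_{j=1}^M r_j^2 \le \bigl(\sum_{j=1}^M r_j\bigr)^2 = N^2 a^4$, since expanding $\bigl(\sum_j r_j\bigr)^2$ produces $\sum_j r_j^2$ plus the nonnegative cross terms. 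This gives the first displayed chain of inequalities.

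For the converse statement I would simply swap the roles of rows and columns: assuming $\|\mathcal{K}_{j,\cdot}\|_2 = a$ for all $j$, the same double-counting identity gives $\sum_{i=1}^N \|\mathcal{K}_{\cdot,i}\|_2^2 = Ma^2$, and applying the same two inequalities to $c_i := \|\mathcal{K}_{\cdot,i}\|_2^2$ (now $N$ nonnegative numbers summing to $Ma^2$) yields $\tfrac{M^2}{N}a^4 \le \sum_{i=1}^N \|\mathcal{K}_{\cdot,i}\|_2^4 \le M^2 a^4$. There is no genuine obstacle here; the only point worth noting is that no assumption on $\mathcal{K}$ beyond equal column (resp.\ row) norms is used, and both bounds are attained --- the lower one when the remaining squared norms are spread as evenly as possible, the upper one when all the mass concentrates on a single row (resp.\ column) --- so neither estimate can be improved in general.
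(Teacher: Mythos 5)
Your argument is correct and essentially the same as the paper's: both reduce to the fact that the squared Frobenius norm equals $Na^2$ (resp.\ $Ma^2$), then bound the sum of squared row (resp.\ column) squared-norms below by the power-mean/Jensen inequality and above by the square of the sum of nonnegative terms. Your added remark on when each bound is attained matches the paper's discussion of the collapsed configurations following the lemma.
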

Following the proof of lemma~\ref{lem:bounds}, the lower bounds can be constructed with a constant embedding matrix and the upper bounds with an embedding matrix where either the rows or columns contain only one non-zero element. Both correspond to collapsed representations and will thus not be attained in practice. While it is impossible to characterize non-collapsed embedding matrices and, as such, derive better practical bounds, these bounds can still be useful to derive the following corollary. We study how close methods are to these bounds in practice in section~\ref{sec:norms} of the supplementary material. The main conclusion is that in all practical scenarios, the sum of norms will be very close to the lower bounds, deviating by a single-digit factor.
\begin{corollary}\label{cor:bounds}
    If embeddings are L2-normalized we have
    \begin{equation}
     L_{nc} - N + \frac{N^2}{M} \leq L_{c} \leq L_{nc} - N + N^2.
    \end{equation}
    Similarly, if dimensions are L2-normalized we have
    \begin{equation}
    L_{c} - M + \frac{M^2}{N} \leq L_{nc} \leq L_{c} - M + M^2. 
    \end{equation}
\end{corollary}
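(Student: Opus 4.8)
The corollary is a direct consequence of combining Theorem~\ref{thm:equivalence} with Lemma~\ref{lem:bounds}, so the plan is essentially to substitute one into the other. First I would treat the case where the embeddings are L2-normalized, i.e. $\|\mathcal{K}_{\cdot,i}\|_2 = 1$ for all $i$. Then $\sum_{i=1}^N \|\mathcal{K}_{\cdot,i}\|_2^4 = N$ exactly, so the identity of Theorem~\ref{thm:equivalence} rearranges to
\begin{equation}
L_c = L_{nc} - N + \sum_{j=1}^M \|\mathcal{K}_{j,\cdot}\|_2^4.
\end{equation}

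Next I would apply Lemma~\ref{lem:bounds} with $a = 1$ to the remaining sum over rows, which gives $\frac{N^2}{M} \leq \sum_{j=1}^M \|\mathcal{K}_{j,\cdot}\|_2^4 \leq N^2$. Plugging these two bounds into the displayed equality yields
\begin{equation}
L_{nc} - N + \frac{N^2}{M} \leq L_c \leq L_{nc} - N + N^2,
\end{equation}
which is the first claimed inequality.

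For the second inequality I would repeat the argument symmetrically: assume the dimensions are L2-normalized, i.e. $\|\mathcal{K}_{j,\cdot}\|_2 = 1$ for all $j$, so $\sum_{j=1}^M \|\mathcal{K}_{j,\cdot}\|_2^4 = M$, rearrange Theorem~\ref{thm:equivalence} to $L_{nc} = L_c - M + \sum_{i=1}^N \|\mathcal{K}_{\cdot,i}\|_2^4$, and invoke the second half of Lemma~\ref{lem:bounds} with $a=1$ to bound $\sum_{i=1}^N \|\mathcal{K}_{\cdot,i}\|_2^4$ between $\frac{M^2}{N}$ and $M^2$. Substituting gives $L_c - M + \frac{M^2}{N} \leq L_{nc} \leq L_c - M + M^2$.

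\textbf{Main obstacle.} There is essentially no hard step here: the only things to be careful about are that the normalization hypothesis forces the relevant one of the two sums of fourth powers of norms to equal exactly $N$ (resp. $M$), and that Lemma~\ref{lem:bounds} is applied with $a=1$ to the \emph{other} sum. The bulk of the conceptual work has already been done in proving Theorem~\ref{thm:equivalence} and Lemma~\ref{lem:bounds}; this corollary is just the bookkeeping that puts them together.
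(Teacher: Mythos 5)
Your proof is correct and follows exactly the paper's route: the paper likewise obtains the corollary by plugging Lemma~\ref{lem:bounds} (with $a=1$ for L2-normalized embeddings or dimensions) into the identity of Theorem~\ref{thm:equivalence}, noting that the normalized sum of fourth powers equals $N$ (resp.\ $M$). Nothing is missing; this is the intended bookkeeping argument.
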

Lemma~\ref{lem:bounds} applied to Theorem~\ref{thm:equivalence} directly gives us corollary~\ref{cor:bounds}, which
means that in practical scenarios, even when we compare methods where the embeddings are not doubly normalized, the contrastive and non-contrastive criteria can't be arbitrarily far apart. We further show experimentally in section~\ref{sec:row-col-norm} that the normalization strategy does not matter from a performance point of view on SimCLR, reinforcing this argument.
Considering the previous discussions, we thus argue that the main differences between sample-contrastive and dimension-contrastive methods come from the optimization process as well as the implementation details.

\textbf{Disguising VICReg as a contrastive method.}
To illustrate theorem~\ref{thm:equivalence} we can rewrite VICReg's criterion to make $L_c$ appear. We first recall the different components of VICReg's criterion. The variance criterion $v$ is a hinge loss that aims at making the variance along every dimension greater than $1$, and the covariance criterion $c$ is exactly defined as $L_{nc}$ applied to centered embeddings. For more details, confer~\cite{bardes2021vicreg}. To make $L_c$ appear, we will still apply the invariance and variance criterion on the embeddings, but the covariance criterion will be applied to the transposed embeddings, effectively making it contrastive since we have:
\begin{equation}
    \text{c}(\mathcal{K}^T) =\| \mathcal{K}^T\left(\mathcal{K}^{T}\right)^T - \diag\left(\mathcal{K}^T\left(\mathcal{K}^{T}\right)^T \right)\|_F^2 =\| \mathcal{K}^T\mathcal{K} - \diag(\mathcal{K}^T\mathcal{K})\|_F^2 = L_c(\mathcal{K}).
\end{equation}
We then just need to add a regularization term on the norms of embeddings and dimensions as follows:
\begin{equation}
    L_{reg}(\mathcal{K}) =\sum_{i=1}^N \|\mathcal{K}_{\cdot,i} \|_2^4 - \sum_{j=1}^M  \|\mathcal{K}_{j,\cdot} \|_2^4,
\end{equation}
and VICReg's loss function can then be written as
\begin{equation}
    \mathcal{L}_{VICReg} = \lambda \sum_{i=1}^N \|\mathcal{K}_{\cdot,i} - \mathcal{K}_{\cdot,i}'\|_2^2 + \mu\left(v(\mathcal{K})+v(\mathcal{K}')\right) + \nu\left(L_c(\mathcal{K}) + L_{reg}(\mathcal{K}) + L_c(\mathcal{K}')+ L_{reg}(\mathcal{K}')\right).
\end{equation}
This rewriting can be seen as a variation of SCL to which is added $L_{reg}$ and that uses the variance loss for normalization.
Being able to make VICReg's criterion sample-contrastive highlights the close relationship between existing sample-contrastive and dimension-contrastive methods and further shows that the difference in the behavior of different methods is not mainly due to whether they are contrastive or not.

\section{Interpolating between methods: impact of the loss function.}

While we have discussed the link between the contrastive and non-contrastive criteria, we can wonder how the design differences in popular criteria manifest themselves in practice. To do so we start by introducing variations on VICReg that will allow us to interpolate between VICReg and SimCLR while isolating precise components of the loss function. While our focus will be on performance, we provide an analysis of the optimization quality in supplementary section~\ref{sec:optimization}. The conclusion is that while some design choices negatively impact the optimization process on the embeddings, there are no easily visible differences in the representations which are used in practice.

\noindent
\textbf{VICReg variations.}
We introduce two variants of VICReg, one that is non-contrastive but inspired by the InfoNCE criterion and one that is contrastive and also inspired by the InfoNCE criterion.
The former is motivated by one of the main differences between methods, which is the use of the LogSumExp (LSE) for the repulsive force (e.g., SimCLR) or the use of the sum of squares (e.g., SCL, VICreg, BT). The latter is motivated by the wish to design contrastive methods, where implementation details such as the negative pair sampling are as close as possible to another method. This way, comparing VICReg to either of those methods will yield a comparison that truly isolates specific components of the loss function. These two methods can also be seen as a transformation from VICReg to SimCLR, which allows us to see when the behavior of VICReg becomes akin to SimCLR's, as illustrated in the following diagram:
\begin{equation*}
    \text{VICReg} \xrightarrow{\text{LogSumExp}} \text{VICReg-exp} \xrightarrow{\text{Contrastive}} \text{VICReg-ctr} \xrightarrow{\text{Neg. pair sampling}} \text{SimCLR}
\end{equation*}

The first variant that we will introduce is VICReg-exp, which uses a repulsive force inspired by the InfoNCE criterion.
We first define the exponential covariance regularization as:
\begin{equation}
c_{exp}(\mathcal{K}) = \frac{1}{d}\sum_i \log\left(\sum_{j \neq i} e^{C(\mathcal{K})_{i,j}/\tau} \right),
\end{equation}
VICReg-exp is then VICReg where we replace the covariance criterion by this exponential covariance criterion, giving an overall criterion of
\begin{equation}
    \mathcal{L}_{VICReg-exp} = \lambda \sum_{i=1}^N \|\mathcal{K}_{\cdot,i} - \mathcal{K}_{\cdot,i}'\|_2^2 + \mu\left(v(\mathcal{K})+v(\mathcal{K}')\right) + \nu\left(c_{exp}(\mathcal{K}) + c_{exp}(\mathcal{K}')\right).
\end{equation}
We then define VICReg-ctr, which is VICReg-exp where we transpose the embedding matrix before applying the variance and covariance regularization. This means that the variance regularization will regularize the norm of the embeddings, and the covariance criterion now penalizes the Gram matrix, with the same repulsive force as in DCL. Transposing the embedding matrix for the variance criterion leads to more stable training and enables the use of mixed precision. We thus have the following criterion:
\begin{equation}
    \mathcal{L}_{VICReg-ctr} = \lambda \sum_{i=1}^N \|\mathcal{K}_{\cdot,i} - \mathcal{K}_{\cdot,i}'\|_2^2 + \mu\left(v(\mathcal{K}^T)+v(\mathcal{K}'^T)\right) + \nu\left(c_{exp}(\mathcal{K}^T) + c_{exp}(\mathcal{K}'^T)\right).
\end{equation}
This way, VICReg-exp will allow us to study the influence of the use of the LogSumExp operator in the repulsive force, and VICReg-ctr to study the difference between sample-contrastive and dimension-contrastive methods when comparing it to VICReg-exp. We will now be able to study the optimization of the two criteria and see how different design choices affect it.

\section{Practical differences between sample-contrastive and dimension-contrastive methods}\label{sec:practice}

While we have discussed how close sample and dimension contrastive methods are in theory, one of the primary considerations when choosing or designing a method is the performance on downstream tasks. Linear classification on ImageNet has been the main focus in most SSL methods, so we will focus on this task. We will consider the two following aspects, which are responsible for most of the discrepancies between methods.

\textbf{Loss implementation.}
Thanks to VICReg-exp, we are able to study the difference between penalizing the Frobenius norm directly and using a LogSumExp to penalize it. Similarly, for VICReg-ctr we are able to study the practical differences between the contrastive and non-contrastive criteria. Finally, with SimCLR we will be able to see how the last details between VICReg-ctr and it can impact performance. 

\textbf{Projector architecture.}
One of the main differences in methods is how the projector is designed. To describe projector architectures we use the following notation: $X-Y-Z$ means that we use linear layers of dimensions $X$, then $Y$ and $Z$. Each layer is followed by a ReLU activation and a batch normalization layer. The last layer has no activation, batch normalization, or bias.\\
In order to study the impact that this has on performance with respect to embedding size, we study three scenarios. First, $d-d-d$, which is the scenario used for VICReg and BT, then $2048-d$ which was originally used for SimCLR, and finally $8192-8192-d$ which was optimal for large embeddings with VICReg.

Due to the extensive nature of the following experiments, we use a proxy of the classical linear evaluation on ImageNet, where the classifier is trained alongside the backbone and projector. Representations are fed to a linear classifier while keeping the gradient of this classifier's criterion from flowing back through the backbone. The addition of this linear classifier is extremely cheap and avoids a costly linear evaluation after training. The performance of this online classifier correlates almost perfectly with its offline counterpart, so we can rely on it to discuss the general behaviors of various methods. This evaluation was briefly mentioned in~\cite{chen2020simple} but without experimental support. We discuss the correlation between the two further in supplementary section~\ref{sec:probe}.

\begin{figure}[!t]
    \centering
    \includegraphics[width=\textwidth]{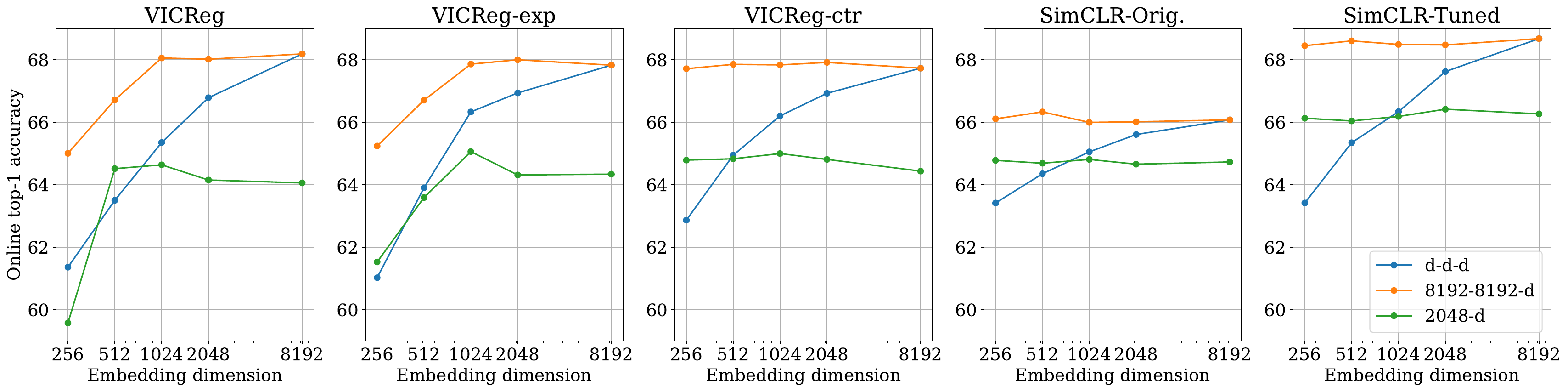}
    \caption{ VICReg, VICReg-exp, and VICReg-ctr perform similarly in 100 epochs training, validating empirically our theoretical result. While the original implementation of SimCLR performs significantly worse -- which is unexpected per our theory -- we are able to improve its performance to VICReg's level. This further validates our findings. While different projector architectures impact performance, behaviors are similar across methods. Confer supplementary section~\ref{sec:all-perfs} for numerical values and hyperparameters.
    }
    \label{fig:proj}
\end{figure}
\textbf{Empirical validation.}
The first takeaway from figure~\ref{fig:proj} is that the transition $\text{VICReg} \rightarrow \text{VICReg-exp}$ via the addition of the LogSumExp did not alter overall performance or behavior. While small performance differences are visible between the two when using light projectors, especially at low embedding dimension, as soon as we use a larger projector these differences disappear with them achieving $68.13$\% and $68.00$\% respectively.
Focusing on the transition $\text{VICReg-exp} \rightarrow \text{VICReg-ctr}$, we can see that there is no noticeable gap in performance in a setting where we were able to isolate the sample-contrastive and dimension-contrastive nature of the methods. This validates empirically our theoretical findings on the equivalence of sample-contrastive and dimension-contrastive methods. When comparing VICReg-ctr to our reproduction of SimCLR, using the original hyperparameters, we can see that VICReg-ctr performs significantly better than SimCLR, achieving $67.92\%$ top-1 accuracy compared to $66.33\%$. This is surprising since the main difference between the two is that VICReg-ctr uses fewer negative pairs, which should not improve performance. As such we will focus on showing that the previously known performance of SimCLR is suboptimal and then fix it.
In supplementary section~\ref{sec:more-results} we further validate our results with $k$-nn classification accuracy and also show that features correlate extremely well between methods.

\textbf{Improving SimCLR's performance.}
To the best of our knowledge, the highest top-1 accuracies reported on ImageNet with SimCLR in 100 epochs are around $66.8\%$ \citep{chen2021intriguing}. While much higher than the $64.7\%$ originally reported, this is still significantly lower than VICReg. Motivated by the performance of VICReg-ctr, we used the same projector as VICReg and heavily tuned hyperparameters, allowing us to find that a temperature of $0.15$ and base learning rate of $0.5$ can lead to a top-1 accuracy of $68.6\%$, matching VICReg's performance in~\cite{bardes2021vicreg}. This reinforces our theoretical insights and highlights the contribution of precise engineering\footnote{Popular PyTorch implementations of SimCLR that are compatible with DDP use a wrong \texttt{gather} operator, which when combined with DDP divides the gradients by the world size. The implementation in VICReg's codebase is correct and should be used. This change had a significant impact on performance and allowed us to reach VICReg's performance.} in recent self-supervised advances. As it stands, SimCLR can still serve as a strong baseline.

\textbf{A larger projector increases performance.}
From figure~\ref{fig:proj} we can see that for every studied method, going from a projector with architecture $2048-d$ to $8192-8192-d$ yielded a significant boost in performance, especially for VICReg and VICReg-ctr, both gaining $3.5-4$ points. The projector $d-d-d$ is in between the two depending on the embedding dimension but also shows a similar trend, the performance increases with the number of parameters for every method. While out of the scope of this work, the study of the importance of the projector's capacity is an exciting line of work that should help gain a deeper understanding of its role in self-supervised learning. We provide a preliminary discussion in the supplementary section~\ref{sec:proj_cap}.

\textbf{Clearing up misconceptions.} While contrastive methods are often thought of as sample inefficient, thus requiring large batch sizes, and non-contrastive methods as dimension inefficient, thus requiring projectors with large output dimensions, we argue that both of these assumptions are misleading and that all of these apparent issues can be alleviated with some care. Most notably, the need for large batch sizes of contrastive methods has been studied in~\cite{yeh2021decoupled} and~\cite{zhang2022dual} where the main conclusions are that with more tuning of the InfoNCE parameters, the robustness of SimCLR and MoCo to small batches can be improved. 
Regarding the robustness of non-contrastive methods to embedding dimension, our experiments show that with a more adequate projector architecture and with careful hyperparameter tuning, the drop in performance at low embedding dimension is not as present as initially reported~\citep{zbontar2021barlow,bardes2021vicreg}. With 256-dimensional embeddings, we were able to achieve $61.36$\% top-1 accuracy by tuning VICReg's hyperparameters, compared to the $55.9$\% that were initially reported in~\cite{bardes2021vicreg}. This can be further improved to $65.01$\% with a bigger projector. While a drop is still present, we are able to reach peak performance at 1024 dimensions, which is lower than the representation's dimension of 2048 and shows that a large embedding dimension is not a deciding factor in downstream 
performance.

\subsection{Influence of the normalization strategy}\label{sec:row-col-norm}

\begin{table}[!t]
  \caption{\small Normalisation strategy used by different methods. Scenarios A and B for SimCLR enable a fairer comparison to VICReg-ctr and VICReg respectively.}
  \label{tab:all-norms}
  \centering
  \begin{tabular}{lcccccccc}
    \toprule
     \multirow{2}{*}{Method} & \multirow{2}{*}{VICReg}  & \multirow{2}{*}{VICReg-exp} & \multirow{2}{*}{VICReg-ctr} & \multicolumn{3}{c}{SimCLR} \\
     \cmidrule{5-7}
     &&&& Classical & A & B\\
    \midrule
     Dimension centering & \cmark & \cmark & \cmark & \xmark & \cmark & \cmark \\
     Embedding norm &  & & 1 & 1 & 1 &  \\
     Dimension norm & $\sqrt{N}$ & $\sqrt{N}$ & & & & $\sqrt{N/M}$ \\
    \bottomrule
  \end{tabular}
  \vspace{-0.5cm}
\end{table}

While we have shown that the performance gap between sample-contrastive and dimension-contrastive methods can be closed with careful hyperparameter tuning, in the studied settings not all details are equal. This is especially true regarding the normalization strategies that are used, and we illustrate the different ones in table~\ref{tab:all-norms}. In order to show that these differences do not impact performance, we will introduce two variations of SimCLR. First, we will look at SimCLR with the centering of the dimensions, and then at SimCLR with the centering of the dimensions as well as a normalization along the dimensions instead of the embeddings. This last strategy is in essence a standardization of the dimensions and is the same scheme used by VICReg.
More precisely the dimension standardization can be written as :
\begin{equation}
   \forall i \in [1,\ldots,M]\quad \mathcal{K}_{\cdot,i} = \frac{\hat{\mathcal{K}}_{\cdot,i}}{\|\hat{\mathcal{K}}_{\cdot,i}\|_2}\times \sqrt{\frac{N}{M}} \quad\text{with}\quad \hat{\mathcal{K}}_{\cdot,i} = \mathcal{K}_{\cdot,i} - \frac{1}{N}\sum_{j=1}^N \mathcal{K}_{j,i}.
\end{equation}
These variations will allow us to compare VICReg and SimCLR when both adopt the same normalization strategy, resulting in a comparison that will more closely fit our theoretical framework.

\begin{figure}[!t]
    \centering
    \includegraphics[width=\textwidth]{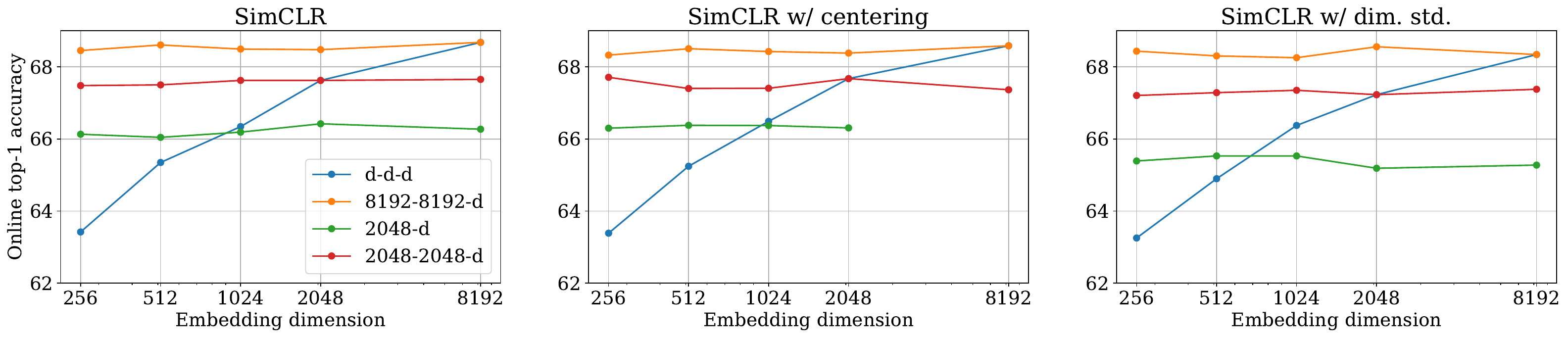}
    \caption{ The performance of SimCLR is unchanged when introducing centering or dimension standardization, highlighting the lack of importance of normalization on peak performance.}
    \label{fig:simclr-norm}
    \vspace{-0.4cm}
\end{figure}

As we can see in figure~\ref{fig:simclr-norm}, the centering and dimension standardization do not impact performance at all and we are able to achieve the same peak performance as before. The performance is slightly lower with a shallow projector $2048-d$, but in all the other scenarios we retrieve the same performance as the original SimCLR. This performance is on par with VICReg and its variations which reinforces our theoretical result in practice. This was further confirmed in a $1000$ epoch run, where SimCLR with dimension standardization was able to reach $72.6\%$ top-1 accuracy, compared to $73.3\%$ for VICReg. While a small difference persists, hyperparameter tuning is very expensive in this setting and is most likely the cause of this gap.\\
From these results, we can conclude that while the normalization strategy can be theoretically motivated or can ease the optimization process, it is not a deciding factor in the performance of self-supervised methods and that the normalization strategy that should be used is the one that is the easiest to work with for a given method.

\section{Conclusion}

Through an analysis of their criteria, we were able to show that sample-contrastive and dimension-contrastive methods have learning objectives that are closely related, as they are effectively minimizing criteria that are equivalent up to row and column normalization of the embedding matrix. This suggests a certain duality in the behavior of such methods, which we studied empirically. Through the lens of variations of VICReg, we were able to study popular design choices in self-supervised loss functions and show their lack of impact on performance, significantly improving the robustness to embedding dimension of VICReg along the way. Motivated by our theoretical findings, we performed ample hyperparameter tuning on SimCLR and were able to close its performance gap with VICReg. We also showed that the normalization strategy does not play an important role in performance. This further reinforces the similarities between methods as predicted by our theoretical results. 
We expect that our results will help extend theoretical works in self-supervised learning to a wider family of methods, as well as help analyses by deriving criteria that are easier to work with. We also expect that our findings will help alleviate preconceived ideas on contrastive and non-contrastive learning.
If one thing must be remembered from this work, it is that \textit{dimension-contrastive and sample-contrastive methods are two sides of the same coin}. Finally, perhaps the most important message of this work is to show that different SOTA SSL methods can be unified. Pinpointing the source of the advancements is an important direction to consolidate our understanding.

\section{Acknowledgments}
The authors wish to thank Randall Balestrierio, Li Jing, Gr\'{e}goire Mialon, Nicolas Ballas, Surya Ganguli, and Pascal Vincent, in no particular order, for insightful discussions. We also thank Florian Bordes for the efficient implementations that were used for our experiments.

\section{Reproducibility statement}

While our pretrainings are very costly, each taking around a day with $8$ V100 GPUs, we provide complete hyperparameter values in table~\ref{tab:all-hypers}. They are compatible with official implementations of the losses, and for VICReg-ctr and VICReg-exp we also provide PyTorch pseudocode in supplementary section~\ref{sec:algs}. In order to reproduce our main figure, we also give the numerical performance in table~\ref{tab:all-perfs}. All of this should make our results reproducible, and, more importantly, should make it so that practitioners can benefit from the improved performance that we introduce.

\bibliography{references}
\bibliographystyle{iclr2023_conference}

\clearpage
\appendix
 \renewcommand{\thefigure}{S\arabic{figure}}
  \renewcommand{\thetable}{S\arabic{table}}
\setcounter{figure}{0}
\setcounter{table}{0}

\section{Background}
In this section, we will recall the loss functions of all the methods we are considering throughout our theoretical analysis. 

\textbf{DCL:} We first take a look at DCL's criterion. We consider that $\mathcal{K}$ is l2 normalized column-wise, i.e. embeddings are normalized. We have

\begin{equation}
\mathcal{L}_{\text{DCL}} = \sum_{i=1}^N -\log\left(\frac{e^{\mathcal{K}_{\cdot,i}^T\mathcal{K}_{\cdot,i}'/\tau}}{\sum_{j\neq i} e^{\mathcal{K}_{\cdot,i}^T\mathcal{K}_{\cdot,j}/\tau }}\right) 
= \sum_{i=1}^N -\frac{\mathcal{K}_{\cdot,i}^T\mathcal{K}_{\cdot,i}'}{\tau}  +\log\left(\sum_{j\neq i} e^{\mathcal{K}_{\cdot,i}^T\mathcal{K}_{\cdot,j}/\tau }\right).
\end{equation}

\textbf{SimCLR:} We now take a look at SimCLR's criterion. We consider that $\mathcal{K}$ is l2 normalized column-wise, i.e. embeddings are normalized. We have

\begin{align}
\mathcal{L}_{\text{SimCLR}} &= \sum_{i=1}^N -\log\left(\frac{e^{\mathcal{K}_{\cdot,i}^T\mathcal{K}_{\cdot,i}'/\tau}}{e^{\mathcal{K}_{\cdot,i}^T\mathcal{K}_{\cdot,i}'/\tau} + \sum_{j\neq i} e^{\mathcal{K}_{\cdot,i}^T\mathcal{K}_{\cdot,j}/\tau }}\right)\\
&= \sum_{i=1}^N -\frac{\mathcal{K}_{\cdot,i}^T\mathcal{K}_{\cdot,i}'}{\tau}  +\log\left(e^{\mathcal{K}_{\cdot,i}^T\mathcal{K}_{\cdot,i}'/\tau} + \sum_{j\neq i} e^{\mathcal{K}_{\cdot,i}^T\mathcal{K}_{\cdot,j}/\tau }\right).
\end{align}

\textbf{Spectral Constrastive Loss:} Spectral Contastive Loss is defined as
\begin{equation}
    \mathcal{L}_{SCL} = -2 \sum_{i=1}^N \mathcal{K}_{\cdot,i}^T \mathcal{K}_{\cdot,i}' + \sum_{j\neq i}\left(\mathcal{K}_{\cdot,i}^T \mathcal{K}_{\cdot,j}\right)^2 = -2 \left(\sum_{i=1}^N \mathcal{K}_{\cdot,i}^T \mathcal{K}_{\cdot,i}' \right) + \|\mathcal{K}^T\mathcal{K} - \diag(\mathcal{K}^T\mathcal{K})\|_F^2.
\end{equation}
The normalization that is employed is to project all embeddings on a ball of radius $\mu$. This means that if their norm is lower than $\mu$, nothing will happen to them.

\noindent
\textbf{Barlow Twins:} We consider that $\mathcal{K}$ is l2 normalized row-wise, i.e. dimensions are normalized. This gives us:
\begin{equation}
    \mathcal{L}_{BT} = \sum_{j=1}^M \left(1-(\mathcal{K}\mathcal{K}'^T)_{j,j}\right)^2 + \lambda \sum_{i,j, i\neq j}^M (\mathcal{K}\mathcal{K}'^T)_{j,i}^2 = \sum_{j=1}^M \left(1-(\mathcal{K}\mathcal{K}'^T)_{j,j}\right)^2 + \lambda \|\mathcal{K}\mathcal{K}'^T - \diag(\mathcal{K}\mathcal{K}'^T)\|_F^2.
\end{equation}

\noindent
\textbf{VICReg:} VICReg's criterion is defined as 
\begin{equation}
    \mathcal{L}_{VICReg} = \lambda \sum_{i=1}^N \|\mathcal{K}_{\cdot,i} - \mathcal{K}_{\cdot,i}'\|_2^2 + \mu\left(v(\mathcal{K})+v(\mathcal{K}')\right) + \nu\left(c(\mathcal{K}) + c(\mathcal{K}')\right).
\end{equation}
With $c$ a criterion that penalizes the off-diagonal terms of the covariance matrix as
\begin{equation}
    c(\mathcal{K}) = \sum_{i\neq j} \text{Cov}(\mathcal{K})_{i,j}^2 = \|\mathcal{K}\mathcal{K}^T - \diag(\mathcal{K}\mathcal{K}^T)\|_F^2 = L_{nc},
\end{equation}
and $v$ a criterion that aims at normalizing dimensions, i.e. rows of $\mathcal{K}$.

\noindent
\textbf{TCR:} TCR's cost function is defined as
\begin{equation}
    \mathcal{L}_{TCR} = - \frac{1}{2}\log\det\left(I + \alpha \text{Cov}(\mathcal{K})\right)
                      = - \frac{1}{2}\log\det\left(I + \alpha \mathcal{K}\mathcal{K}^T\right)
                      = -\frac{1}{2} \sum_{i} \log\left(1+\alpha\sigma_i^2\right),
\end{equation}
where $\sigma_i$ is the $i$-th singular value of $\mathcal{K}$.

\section{Links between methods and our criteria\label{sec:other-methods}}

While we focus on methods for which the regularization is obtained through the criterion, several other methods can be linked informally to our results. The difficulty in linking them to $L_c$ or $L_{nc}$ can also come from choices that are motivated by practical limitations, such as the use of a memory bank, and which do not change methods fundamentally.

One of the most surprising lines of works, BYOL~\citep{grill2020byol} and SimSiam~\citep{chen2020simsiam}, showed that using stop-gradient on one side of the encoder and using a predictor network to create asymmetry was enough to avoid collapse and learn good representations. Even though they do not avoid collapse explicitly through their criteria, recent works such as~\cite{halvagal2022predictor} or Theorem 3 from~\cite{tian2021understanding}  have shown links between the training dynamics of SimSiam and variance and covariance regularization, akin to what $L_{nc}$ would lead to. While these analyses require assumptions such as the linearity of the encoder, they still help shine a light on SimSiam and BYOL's behavior and enable us to see how they can be related to our results.

Due to the popularity of sample-contrastive methods, several variants have emerged to improve their sample efficiency or their performance in general.
One such modification is illustrated in MoCo~\citep{he2020moco,chen2020mocov2,chen2021mocov3} where a memory bank of sample is combined with an exponential moving average (EMA) of the encoder to provide better negative pairs and thus improve training. While this makes it hard to relate MoCo to our framework, it still relies on an InfoNCE criterion like SimCLR and thus leads to similar representations.
SimCLR and MoCo become especially close near convergence since the online network and the EMA one will be very similar and thus the two methods also become more alike.

Clustering methods such as DeepCluster~\citep{caron2018clustering}, SwAV~\citep{caron2020swav} or DINO~\cite{caron2021dino} can also be related informally to sample-contrastive approaches. Similarly to MoCo, the main difference lies in the construction of the negative pairs, which are constructed using cluster centers here. The embeddings are then contrasted with these clustering prototypes using losses akin to InfoNCE. In DINO, the clustering aspect is more subtle as it is done online, thanks to the last linear layer of the projector which can be thought of as the bank of cluster prototypes, and the embeddings are then the outputs of the penultimate layer. Its projector can thus be decomposed into two parts, the first being the classical projector which is followed by L2 normalization, and the last layer which acts as a clustering layer thanks to the softmax that follows it. As such, while clustering methods cannot be clearly linked to our framework, a link to sample-contrastive methods is still present, even if only informally.

Overall, while not all methods can fit clearly into our results, we are still able to relate most of them to sample-contrastive or dimension-contrastive methods, even if it is with less rigor. This further reinforces the similarity between methods.

\section{Proofs}

\begin{lemma}\label{lem:beta}
    Let $X,Y \sim \sigma^{D-1}$ two i.i.d. random variables corresponding to vectors uniformly distributed on $S^{D-1}$. Their dot product follows the following distribution
    \begin{equation*}
        \frac{X^T Y + 1}{2} \sim \text{Beta}\left(\frac{D-1}{2},\frac{D-1}{2}\right).
    \end{equation*}
\end{lemma}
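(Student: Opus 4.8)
The plan is to use the rotational invariance of the uniform law on $S^{D-1}$ to collapse the two-variable statement to a statement about a single coordinate, and then to recognize the resulting density as an affine image of the symmetric Beta law. First I would condition on $Y$: since the law of $X$ is invariant under every orthogonal transformation, and some rotation sends $Y$ to the first standard basis vector $e_1$, the conditional law of $X^T Y$ given $Y$ is the same for every value of $Y$ and coincides with the law of $X_1$, the first coordinate of a point drawn uniformly from $S^{D-1}$. Hence it suffices to prove that $\tfrac{X_1+1}{2}\sim\text{Beta}\!\left(\tfrac{D-1}{2},\tfrac{D-1}{2}\right)$, which in particular shows the stated dot-product distribution does not depend on conditioning.

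Next I would compute the marginal density of $X_1$. The cleanest route is the Gaussian representation $X \stackrel{d}{=} G/\|G\|$ with $G=(G_1,\dots,G_D)$ a standard Gaussian vector; then $X_1^2 = G_1^2/\big(G_1^2 + \sum_{k\geq 2} G_k^2\big)$, and since $G_1^2\sim\text{Gamma}(\tfrac12,\tfrac12)$ and $\sum_{k\geq 2}G_k^2\sim\text{Gamma}(\tfrac{D-1}{2},\tfrac12)$ are independent, the beta--gamma relation gives $X_1^2\sim\text{Beta}(\tfrac12,\tfrac{D-1}{2})$. Because $X_1=G_1/\|G\|$ is symmetric about $0$ (flip the sign of $G_1$), a change of variables from $X_1^2$ to $X_1$ yields $f_{X_1}(t) = (1-t^2)^{(D-3)/2}/B(\tfrac12,\tfrac{D-1}{2})$ on $(-1,1)$. (An equivalent derivation uses the classical surface-area slicing argument for the sphere, giving the same $(1-t^2)^{(D-3)/2}$ profile directly.)

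Finally I would apply the affine substitution $W=(X_1+1)/2$, using the identity $1-(2w-1)^2 = 4w(1-w)$, to obtain $f_W(w)\propto w^{(D-3)/2}(1-w)^{(D-3)/2} = w^{(D-1)/2-1}(1-w)^{(D-1)/2-1}$ on $(0,1)$; since $f_W$ is a probability density the normalizing constant is forced, so this is exactly the $\text{Beta}\!\left(\tfrac{D-1}{2},\tfrac{D-1}{2}\right)$ density. The main obstacle is really only the second step — pinning down the marginal law of a single coordinate of a uniform point on the sphere; the symmetry reduction and the closing change of variables are routine bookkeeping.
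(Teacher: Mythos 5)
Your proof is correct. The overall skeleton matches the paper's: both reduce by rotational invariance to the law of a single coordinate (equivalently $X^T e_1$), both pass through the intermediate fact that the squared dot product is $\text{Beta}\left(\frac{1}{2},\frac{D-1}{2}\right)$ distributed, and both finish with the affine substitution $w=\frac{t+1}{2}$ using $1-(2w-1)^2=4w(1-w)$. Where you differ is in how that intermediate fact is obtained: you use the Gaussian representation $X\stackrel{d}{=}G/\|G\|$ together with the beta--gamma algebra ($G_1^2\sim\text{Gamma}(\tfrac12,\tfrac12)$ independent of $\sum_{k\ge 2}G_k^2\sim\text{Gamma}(\tfrac{D-1}{2},\tfrac12)$), then recover the signed coordinate by symmetry, whereas the paper argues geometrically, reading the CDF off the surface area of a hyperspherical cap (citing a closed-form cap-area formula) to get $1-(X^TY)^2\sim\text{Beta}\left(\frac{D-1}{2},\frac12\right)$ and then invoking the reflection symmetry of the Beta density. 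Your route is self-contained and purely probabilistic, avoiding the external cap-area formula and the extra reflection step; the paper's route is shorter once that geometric formula is granted and makes the spherical picture explicit. Both yield the classical marginal density $f_{X_1}(t)\propto(1-t^2)^{(D-3)/2}$ on $(-1,1)$ and hence the stated $\text{Beta}\left(\frac{D-1}{2},\frac{D-1}{2}\right)$ law, so there is no gap.
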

\begin{proof}
A similar result was proved in~\cite{fernandez2022sslwatermarking}, though we go one step further and derive the distribution of $\frac{X^T Y + 1}{2}$. We follow a more geometrical argument and invite the reader to confer~\cite{fernandez2022sslwatermarking} for an alternative approach.\\

By the symmetry of the hypersphere, the distribution of $X^T Y$ is the same as the one of $X^T (1,0\ldots,0)$, which corresponds to rotating the reference frame. The cumulative distribution function then corresponds to the surface of the hyperspherical cap of angle $\cos^{-1}\left(X_1\right)$.\\
Using the formulas for the area of a spherical cap on $S^{D}$ derived in~\cite{li2011concise}, as well as the fact than $\sin^2(\cos^{-1}(x)) = 1-x^2$ we directly obtain that for $X^T Y > 0$ (i.e. $\cos^{-1}\left(X_1\right) \leq \frac{\pi}{2}$), we have $1 - (X^T Y)^2 \sim \text{Beta}\left(\frac{D-1}{2},\frac{1}{2}\right)$.\\
Since the density of the Beta distribution has reflectional symmetry, we see that $(X^T Y)^2 \sim \text{Beta}\left(\frac{1}{2},\frac{D-1}{2}\right)$.

By substituting in $u = \frac{X^TY+1}{2}$ if follows directly that
\begin{equation}
    u \sim \text{Beta}\left(\frac{D-1}{2},\frac{D-1}{2}\right),
\end{equation}
concluding the proof.
\end{proof}

\begin{proposition}\label{lem:distrib-dot-infnce}
Considering an infinite amount of available negative samples, SimCLR and DCL's criteria lead to embeddings where for negative pairs $(x,x^-)\in \mathbb{R}^M$ we have
\begin{equation}
    \mathbb{E}\left[ x^T x^- \right] = 0 \quad\text{and}\quad \text{Var}\left[ x^T x^- \right] = \frac{1}{M}.
\end{equation}
\end{proposition}
\begin{proof}
The proof hinges on Theorem 1 from~\cite{wang2020understanding}, which states that as the number of negative samples goes to infinity, optimizing the repulsive force of the InfoNCE criterion leads to uniformly distributed embeddings on the $M$-hypersphere.\\

This uniform distribution allows us to leverage Lemma~\ref{lem:beta} in saying that as the number of negative samples goes to infinity, for any pair of random embeddings $X,Y$, we have $\frac{X^T Y + 1}{2} \sim \text{Beta}\left(\frac{M-1}{2},\frac{M-1}{2}\right)$.\\
We can directly obtain the two following properties
\begin{align}
    \mathbb{E}\left[ \frac{X^T Y + 1}{2} \right] &= \frac{\frac{M-1}{2}}{\frac{M-1}{2}+\frac{M-1}{2}} = \frac{1}{2} \;\Rightarrow\; \mathbb{E}\left[ X^T Y \right] = 0,\\
    \text{Var}\left[\frac{X^T Y + 1}{2}\right]&= \frac{\frac{M-1}{2}\times \frac{M-1}{2}}{\left(\frac{M-1}{2}+\frac{M-1}{2}\right)^2\left(\frac{M-1}{2}+\frac{M-1}{2}+1\right)} = \frac{1}{4M} \;\Rightarrow\; \text{Var}\left[ X^T Y \right] = \frac{1}{M},
\end{align}
concluding the proof. 

\end{proof}

\begin{proposition}
  SimCLR-abs/sq, DCL-sq/abs, as well as Spectral Contrastive Loss are sample-contrastive methods.  Barlow Twins, VICReg, and TCR are dimension-contrastive methods. 
\end{proposition}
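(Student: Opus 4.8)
The plan is to proceed method by method: for each one I recall the explicit form of its regularization (or ``repulsive'') term and rewrite it, using only the normalization or centering the method already performs, as either $L_c$ or $L_{nc}$ up to a positive multiplicative constant and an additive constant coming from diagonal entries. The unifying template is the following observation. If a method penalizes a quantity of the form $\sum_{i\neq j}\phi(\mathcal{K}_{\cdot,i}^T\mathcal{K}_{\cdot,j})$ with $\phi\ge 0$ and $\phi(t)=0 \iff t=0$, it has the same minimizers as $L_c=\sum_{i\neq j}(\mathcal{K}_{\cdot,i}^T\mathcal{K}_{\cdot,j})^2$ (all off-diagonal Gram entries vanish), and symmetrically for row inner products and $L_{nc}$; when $\phi(t)=t^2$ the two are literally equal. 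So I check, method by method, that the regularizer has this shape with inner products taken over columns (sample-contrastive) or over rows (dimension-contrastive).

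\emph{Sample-contrastive side.} For SimCLR-sq and DCL-sq the repulsive term is $\sum_{i\neq j}\bigl(\cos(\mathcal{K}_{\cdot,i},\mathcal{K}_{\cdot,j})\bigr)^2$; since both methods use cosine similarity the columns of $\mathcal{K}$ may be taken L2-normalized, so each cosine is $\mathcal{K}_{\cdot,i}^T\mathcal{K}_{\cdot,j}$ and the term equals $\|\mathcal{K}^T\mathcal{K}-\diag(\mathcal{K}^T\mathcal{K})\|_F^2=L_c$ exactly. For SimCLR-abs and DCL-abs the same holds with $\phi(t)=|t|$, so the template applies. For the Spectral Contrastive Loss the repulsive term $\mathbb{E}_{x,x^-}[(f(x)^Tf(x^-))^2]$ is, on a finite batch, $\tfrac{1}{N(N-1)}\sum_{i\neq j}(\mathcal{K}_{\cdot,i}^T\mathcal{K}_{\cdot,j})^2=\tfrac{1}{N(N-1)}L_c$ once the $i=j$ self-similarity terms are separated out, and those diagonal terms are controlled by the method's explicit projection of the embeddings onto a ball of radius $\sqrt{\mu}$; hence all three are sample-contrastive. (Original SimCLR and DCL do not fit this template, since their repulsive force is a LogSumExp over $e^{\cdot/\tau}$ rather than squared similarities — which is precisely why the $-$sq/$-$abs surrogates are introduced and why the original losses are instead routed through Proposition~\ref{prop:distrib-dot-infonce}.)

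\emph{Dimension-contrastive side.} For VICReg the covariance criterion is by definition $c(\mathcal{K})=\tfrac{1}{M}\sum_{i\neq j}C(\mathcal{K})_{i,j}^2$ with $C(\mathcal{K})$ the empirical covariance of the column-centered embeddings; as $C(\mathcal{K})=\tfrac{1}{N-1}\bar{\mathcal{K}}\bar{\mathcal{K}}^T$ this equals $\tfrac{1}{M(N-1)^2}\bigl\|\bar{\mathcal{K}}\bar{\mathcal{K}}^T-\diag(\bar{\mathcal{K}}\bar{\mathcal{K}}^T)\bigr\|_F^2=\tfrac{1}{M(N-1)^2}L_{nc}(\bar{\mathcal{K}})$, i.e., $L_{nc}$ on centered embeddings. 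For Barlow Twins, the off-diagonal penalty $\sum_{i\neq j}\mathcal{C}_{i,j}^2$ on the cross-correlation matrix $\mathcal{C}$ becomes, after per-dimension standardization (zero mean, unit variance, so $\|\mathcal{K}_{j,\cdot}\|_2=\sqrt{N}$) and in the idealized regime where the two branches coincide, the penalty $\tfrac{1}{N^2}\bigl\|\mathcal{K}\mathcal{K}^T-\diag(\mathcal{K}\mathcal{K}^T)\bigr\|_F^2=\tfrac{1}{N^2}L_{nc}$, since then $\mathcal{C}=\tfrac{1}{N}\mathcal{K}\mathcal{K}^T$. For TCR the coding-rate regularizer is (a scalar multiple of) $\log\det\bigl(I+\alpha\,\mathcal{K}\mathcal{K}^T\bigr)$ with L2-normalized embeddings, so the diagonal of $\mathcal{K}\mathcal{K}^T$ is fixed; maximizing it pushes $\mathcal{K}\mathcal{K}^T$ toward a scalar multiple of the identity, and a second-order expansion of $-\log\det$ at that point is, up to additive and multiplicative constants, $\|\mathcal{K}\mathcal{K}^T-cI\|_F^2$, whose off-diagonal part is $L_{nc}$ — so TCR is dimension-contrastive in the same minimizer-level sense as the $\phi$-template.

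The routine bookkeeping — tracking the $1/N$, $1/M$, $1/(N-1)$ prefactors, the additive diagonal constants, and the centering — is immediate given the normalization facts already recorded in the ``Influence of normalization'' discussion together with the identity $\|\mathcal{K}^T\mathcal{K}\|_F^2=\|\mathcal{K}\mathcal{K}^T\|_F^2$ underlying Theorem~\ref{thm:equivalence}. The two genuine points of care are (i) the TCR log-determinant, where one must justify the second-order reduction and fix the precise sense of ``minimizes $L_{nc}$'' (shared global minimizers, or agreement to second order at the optimum), and (ii) the Barlow Twins cross-correlation, where collapsing the two-branch matrix to the single-branch Gram matrix $\tfrac1N\mathcal{K}\mathcal{K}^T$ is exact only in the idealized two-views-agree regime. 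I expect (i) to be the main obstacle, and I will state the ``minimizes'' notion explicitly up front so that the $-$abs variants and TCR are covered without claiming literal equality where none holds.
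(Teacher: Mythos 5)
Your dimension-contrastive side and your treatment of Spectral Contrastive Loss are essentially the paper's argument: VICReg's covariance term is literally $L_{nc}$ on centered embeddings, Barlow Twins is reduced to the single-branch Gram matrix by identifying the two branches (the paper makes the same idealization, justified by the shared backbone and identical augmentation distribution), and for TCR the paper simply invokes that the log-det criterion is optimized by a diagonal covariance matrix, so your second-order expansion is a heavier route to the same shared-minimizer conclusion. Your explicit ``shared minimizers'' reading of ``minimizes $L_c$/$L_{nc}$'' is also the notion the paper implicitly uses.

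The genuine gap is on the sample-contrastive side: SimCLR-sq/abs and DCL-sq/abs are \emph{not} obtained by replacing the InfoNCE repulsive force with a plain sum $\sum_{i\neq j}\phi(\mathcal{K}_{\cdot,i}^T\mathcal{K}_{\cdot,j})$. In the paper they keep the full InfoNCE/LogSumExp structure and only substitute $f(x)=x^2$ or $|x|$ for the cosine similarity inside the exponentials, so the repulsive term is $\sum_i \log\bigl(\sum_{j\neq i} e^{f(\mathcal{K}_{\cdot,i}^T\mathcal{K}_{\cdot,j})/\tau}\bigr)$ for DCL-sq/abs, with the positive-pair term additionally inside the log for SimCLR-sq/abs. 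Consequently your claim that for the $-$sq variants the repulsive term ``equals $L_c$ exactly'' is false under the paper's definitions, and your $\phi$-template does not apply as stated; the proof you give for these four methods addresses different objectives than the ones the proposition is about. The repair is what the paper does: since $f\geq 0$, sandwich the LogSumExp between $\max_{j\neq i} f(\mathcal{K}_{\cdot,i}^T\mathcal{K}_{\cdot,j})$ and that max plus $\tau\log(N-1)$, so the repulsive force is minimized exactly when all off-diagonal Gram entries vanish — the same minimizers as $L_c$ — and for SimCLR-sq/abs, where the positive pair sits inside the denominator and the max bound is unavailable, argue directly that the loss is increasing in each negative-pair similarity $f(\mathcal{K}_{\cdot,i}^T\mathcal{K}_{\cdot,j})$, hence minimization again drives the off-diagonal Gram entries to zero.
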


\begin{proof}
\noindent
\textbf{DCL-sq/abs:} We first take a look at DCL-sq/abs's criteria. We consider that $\mathcal{K}$ is l2 normalized column-wise, i.e. embeddings are normalized. Let $f : \mathbb{R} \rightarrow \mathbb{R}^+$ be either defined as $f(x) = x^2$ for DCL-sq or as $f(x) = |x|$ for DCL-abs. We have

\begin{equation}
\mathcal{L}_{\text{DCL}} = \sum_{i=1}^N -\log\left(\frac{e^{f\left(\mathcal{K}_{\cdot,i}^T\mathcal{K}_{\cdot,i}'\right)/\tau}}{\sum_{j\neq i} e^{f\left(\mathcal{K}_{\cdot,i}^T\mathcal{K}_{\cdot,j}\right)/\tau }}\right) 
= \sum_{i=1}^N -\frac{f\left(\mathcal{K}_{\cdot,i}^T\mathcal{K}_{\cdot,i}'\right)}{\tau}  +\log\left(\sum_{j\neq i} e^{f\left(\mathcal{K}_{\cdot,i}^T\mathcal{K}_{\cdot,j}\right)/\tau }\right).
\end{equation}
The first part of this criterion is the invariance criterion and the second part is the LogSumExp($LSE$) of embeddings' similarity. We know that this is a smooth approximation of the max operator with the following bounds:
\begin{equation}
\max\left(\{\forall j \neq i,\; f\left(\mathcal{K}_{\cdot,i}^T\mathcal{K}_{\cdot,j}\right)\} \right) \leq
\tau \log\left(\sum_{j\neq i} e^{f\left(\mathcal{K}_{\cdot,i}^T\mathcal{K}_{\cdot,j}\right)/\tau }\right)\leq 
\max\left(\{\forall j \neq i,\; f\left(\mathcal{K}_{\cdot,i}^T\mathcal{K}_{\cdot,j}\right)\} \right) + \tau\log(N-1).
\end{equation}
We can thus say that using either 
\begin{equation}
    \sum_{i=1}^N  \log\left(\sum_{j\neq i} e^{f\left(\mathcal{K}_{\cdot,i}^T\mathcal{K}_{\cdot,j}\right)/\tau }\right)
   \quad \text{or} \quad\sum_{i=1}^N \max_{j\neq i} f\left(\mathcal{K}_{\cdot,i}^T\mathcal{K}_{\cdot,j}\right),
\end{equation}
as repulsive force will lead to the same result, a diagonal Gram matrix. Since this is the same goal as for our sample-contrastive criterion, DCL-sq and DCL-abs are sample-contrastive methods.

The link to $L_c$ is more visible with the right term, which corresponds to only penalizing one value per row/column of the Gram matrix. While this is less effective than penalizing all of them at once, given sufficient training iterations it will converge to the same solution.

\noindent
\textbf{SimCLR-sq/abs:} We now take a look at SimCLR-abs/sq's criteria. We consider that $\mathcal{K}$ is l2 normalized column-wise, i.e. embeddings are normalized. Let $f : \mathbb{R} \rightarrow \mathbb{R}^+$ be either defined as $f(x) = x^2$ for SimCLR-sq or as $f(x) = |x|$ for SimCLR-abs. We have

\begin{align}
\mathcal{L}_{\text{SimCLR}} &= \sum_{i=1}^N -\log\left(\frac{e^{f\left(\mathcal{K}_{\cdot,i}^T\mathcal{K}_{\cdot,i}'\right)/\tau}}{e^{f\left(\mathcal{K}_{\cdot,i}^T\mathcal{K}_{\cdot,i}'\right)/\tau} + \sum_{j\neq i} e^{f\left(\mathcal{K}_{\cdot,i}^T\mathcal{K}_{\cdot,j}\right)/\tau }}\right)\\
&= \sum_{i=1}^N -\frac{f\left(\mathcal{K}_{\cdot,i}^T\mathcal{K}_{\cdot,i}'\right)}{\tau}  +\log\left(e^{f\left(\mathcal{K}_{\cdot,i}^T\mathcal{K}_{\cdot,i}'\right)/\tau} + \sum_{j\neq i} e^{f\left(\mathcal{K}_{\cdot,i}^T\mathcal{K}_{\cdot,j}\right)/\tau }\right).
\end{align}
Due to the presence of the positive pair in the repulsive force (right term), we cannot use the same reasoning with the max operator as for DCL-sq/abs which gave a clear intuition.

Nonetheless one can clearly see that to minimize this criterion, all the similarities between the negative pairs, i.e. $ \forall i, \forall j\neq i,\;f\left(\mathcal{K}_{\cdot,i}^T\mathcal{K}_{\cdot,j}\right)$, need to be minimized. As this will result in a diagonal Gram matrix, we can say that minimizing this criterion will also minimize our sample-contrastive one. We can thus conclude that SimCLR-sq and SimCLR-abs are sample-contrastive methods.

\noindent
\textbf{Spectral Constrastive Loss:} We will now consider Spectral Constrastive Learning's criterion. We have
\begin{equation}
    \mathcal{L}_{SCL} = -2 \sum_{i=1}^N \mathcal{K}_{\cdot,i}^T \mathcal{K}_{\cdot,i}' + \sum_{j\neq i}\left(\mathcal{K}_{\cdot,i}^T \mathcal{K}_{\cdot,j}\right)^2 = -2 \left(\sum_{i=1}^N \mathcal{K}_{\cdot,i}^T \mathcal{K}_{\cdot,i}' \right) + \|\mathcal{K}^T\mathcal{K} - \diag(\mathcal{K}^T\mathcal{K})\|_F^2.
\end{equation}
This means that Spectral Contrastive Loss also falls in the sample-contrastive category.\\

\noindent
\textbf{Barlow Twins:} Looking at Barlow Twin's criterion we have
\begin{equation}
    \mathcal{L}_{BT} = \sum_{j=1}^M \left(1-(\mathcal{K}\mathcal{K}'^T)_{j,j}\right)^2 + \lambda \sum_{i,j, i\neq j}^M (\mathcal{K}\mathcal{K}'^T)_{j,i}^2 = \sum_{j=1}^M \left(1-(\mathcal{K}\mathcal{K}'^T)_{j,j}\right)^2 + \lambda \|\mathcal{K}\mathcal{K}'^T - \diag(\mathcal{K}\mathcal{K}'^T)\|_F^2.
\end{equation}

Since the distribution of augmentations is the same for both views of the images, and the backbone is shared, taking a negative pair from $\mathcal{K}$ or $\mathcal{K}'$ is the same. Barlow Twins' criterion can then be rewritten as
\begin{equation}
    \mathcal{L}_{BT} = \sum_{j=1}^M \left(1-(\mathcal{K}\mathcal{K}'^T)_{j,j}\right)^2 + \lambda \|\mathcal{K}\mathcal{K}^T - \diag(\mathcal{K}\mathcal{K}^T)\|_F^2.
\end{equation}
As such the right part of Barlow Twins' criterion is indeed the dimension-contrastive criterion, making Barlow Twins a dimension-contrastive method.

\noindent
\textbf{VICReg:} VICReg's criterion is defined as 
\begin{equation}
    \mathcal{L}_{VICReg} = \lambda \sum_{i=1}^N \|\mathcal{K}_{\cdot,i} - \mathcal{K}_{\cdot,i}'\|_2^2 + \mu\left(v(\mathcal{K})+v(\mathcal{K}')\right) + \nu\left(c(\mathcal{K}) + c(\mathcal{K}')\right).
\end{equation}
Recall that $c$ is a criterion that penalizes the off-diagonal terms of the covariance matrix as follows:
\begin{equation}
    c(\mathcal{K}) = \sum_{i\neq j} \text{Cov}(\mathcal{K})_{i,j}^2 = \|\mathcal{K}\mathcal{K}^T - \diag(\mathcal{K}\mathcal{K}^T)\|_F^2 = L_{nc}.
\end{equation}
This means that VICReg is a dimension-contrastive method.\\

\noindent
\textbf{TCR:} TCR's cost function is defined as
\begin{equation}
    \mathcal{L}_{TCR} = - \frac{1}{2}\log\det\left(I + \alpha \text{Cov}(\mathcal{K})\right)
                      = - \frac{1}{2}\log\det\left(I + \alpha \mathcal{K}\mathcal{K}^T\right)
                      = -\frac{1}{2} \sum_{i} \log\left(1+\alpha\sigma_i^2\right),
\end{equation}
where $\sigma_i$ is the $i$-th singular value of $\mathcal{K}$. As discussed in~\cite{li2022neural}, this criterion leads to a diagonal covariance matrix, similarly to the non-contrastive criterion. We can thus say using either
\begin{equation}
     -\frac{1}{2} \sum_{i} \log\left(1+\alpha\sigma_i^2\right) \quad \text{or} \quad  \|\mathcal{K}\mathcal{K}^T - \diag(\mathcal{K}\mathcal{K}^T)\|_F^2
\end{equation}
will lead to diagonal covariance matrices, or similarly, null off-diagonal terms in the Covariance matrix.
This means that TCR also falls in the category of dimension-contrastive methods.
\end{proof}

\begin{theorem}
The sample-contrastive and dimension-contrastive criteria $L_c$ and $L_{nc}$ are equivalent up to row and column normalization of the embedding matrix $\mathcal{K}$. Consider a batch size of $N$ and an embedding dimension of $M$. We have:
\begin{equation}
L_{nc} + \sum_{j=1}^M  \|\mathcal{K}_{j,\cdot} \|_2^4 = L_{c} + \sum_{i=1}^N \|\mathcal{K}_{\cdot,i} \|_2^4.
\end{equation}
\end{theorem}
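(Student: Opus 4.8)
The plan is to reduce both sides to the single quantity $\|\mathcal{K}^T\mathcal{K}\|_F^2 = \|\mathcal{K}\mathcal{K}^T\|_F^2$. First I would record the elementary identity that, for any square matrix $A$, subtracting its extracted diagonal removes exactly the diagonal contribution to the Frobenius norm:
\begin{equation}
\|A - \diag(A)\|_F^2 = \|A\|_F^2 - \sum_i A_{i,i}^2.
\end{equation}
This holds because $A - \diag(A)$ and $\diag(A)$ have disjoint support as entrywise objects, so the squared Frobenius norm splits additively.

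Next I would apply this identity twice. Taking $A = \mathcal{K}^T\mathcal{K} \in \mathbb{R}^{N\times N}$ gives $L_c = \|\mathcal{K}^T\mathcal{K}\|_F^2 - \sum_{i=1}^N (\mathcal{K}^T\mathcal{K})_{i,i}^2$, and since $(\mathcal{K}^T\mathcal{K})_{i,i} = \sum_k \mathcal{K}_{k,i}^2 = \|\mathcal{K}_{\cdot,i}\|_2^2$, this reads $L_c + \sum_{i=1}^N \|\mathcal{K}_{\cdot,i}\|_2^4 = \|\mathcal{K}^T\mathcal{K}\|_F^2$. Symmetrically, taking $A = \mathcal{K}\mathcal{K}^T \in \mathbb{R}^{M\times M}$ and using $(\mathcal{K}\mathcal{K}^T)_{j,j} = \|\mathcal{K}_{j,\cdot}\|_2^2$ yields $L_{nc} + \sum_{j=1}^M \|\mathcal{K}_{j,\cdot}\|_2^4 = \|\mathcal{K}\mathcal{K}^T\|_F^2$. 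So the claimed equality is equivalent to $\|\mathcal{K}^T\mathcal{K}\|_F^2 = \|\mathcal{K}\mathcal{K}^T\|_F^2$.

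Finally I would establish that last equality. The cleanest route is the cyclic invariance of the trace: $\|\mathcal{K}^T\mathcal{K}\|_F^2 = \Tr\!\big(\mathcal{K}^T\mathcal{K}\mathcal{K}^T\mathcal{K}\big) = \Tr\!\big(\mathcal{K}\mathcal{K}^T\mathcal{K}\mathcal{K}^T\big) = \|\mathcal{K}\mathcal{K}^T\|_F^2$. (Equivalently, one may diagonalize via the SVD $\mathcal{K} = U\Sigma V^T$: both sides equal $\sum_i \sigma_i^4$ where $\sigma_i$ are the singular values of $\mathcal{K}$.) Chaining the three displays closes the argument, with no assumption on $\mathcal{K}$ needed. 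There is essentially no obstacle here; the only step requiring a moment of thought is recognizing that the Gram and covariance matrices share the same squared Frobenius norm, which is exactly the trace/singular-value identity above and is precisely the intuition the surrounding text attributes to the result.
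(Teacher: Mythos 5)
Your proposal is correct and follows essentially the same route as the paper's proof: both split off the diagonal contribution (identifying $(\mathcal{K}^T\mathcal{K})_{i,i}$ and $(\mathcal{K}\mathcal{K}^T)_{j,j}$ with the squared column and row norms, hence the fourth-power sums) and then equate $\|\mathcal{K}^T\mathcal{K}\|_F^2$ with $\|\mathcal{K}\mathcal{K}^T\|_F^2$ via cyclic invariance of the trace. Your packaging through the disjoint-support identity $\|A-\diag(A)\|_F^2 = \|A\|_F^2 - \sum_i A_{i,i}^2$ is just a slightly tidier presentation of the same trace expansion the paper carries out explicitly.
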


\begin{proof} 
This proof is heavily inspired by the proof of Lemma~3.2 from \cite{le2011ica} which provides a similar result for doubly stochastic matrices.\\
We have
\begin{align}
 L_{nc} &= \|\mathcal{K}\mathcal{K}^T - \diag(\mathcal{K}\mathcal{K}^T)\|_F^2 \\
      &=  tr\left[(\mathcal{K}\mathcal{K}^T - \diag(\mathcal{K}\mathcal{K}^T))^T(\mathcal{K}\mathcal{K}^T - \diag(\mathcal{K}\mathcal{K}^T))\right]\\
      &= tr(\mathcal{K}\mathcal{K}^T\mathcal{K}\mathcal{K}^T) -2 tr(\mathcal{K}\mathcal{K}^T\diag(\mathcal{K}\mathcal{K}^T)) + tr(\diag(\mathcal{K}\mathcal{K}^T)\diag(\mathcal{K}\mathcal{K}^T))\\
      &=  tr(\mathcal{K}\mathcal{K}^T\mathcal{K}\mathcal{K}^T) -  tr(\mathcal{K}\mathcal{K}^T\diag(\mathcal{K}\mathcal{K}^T)) \\
      & = tr(\mathcal{K}^T\mathcal{K}\mathcal{K}^T\mathcal{K}) -  tr(\mathcal{K}\mathcal{K}^T\diag(\mathcal{K}\mathcal{K}^T)).
\end{align}
Similarly for $L_c$, we obtain
\begin{align}
 L_{c} &= \|\mathcal{K}^T\mathcal{K} - \diag(\mathcal{K}^T\mathcal{K})\|_F^2 \\
      & = tr(\mathcal{K}^T\mathcal{K}\mathcal{K}^T\mathcal{K}) -  tr(\mathcal{K}^T\mathcal{K}\diag(\mathcal{K}^T\mathcal{K})).
\end{align}
Since $\left(\mathcal{K}^T\mathcal{K}\right)_{i,i} = \| \mathcal{K}_{\cdot,i} \|_2^2$ we deduce that $tr(\mathcal{K}^T\mathcal{K}\diag(\mathcal{K}^T\mathcal{K})) = \sum_{i=1}^N \|K_{\cdot,i}\|_2^4$.
Similarly, we obtain that $tr(\mathcal{K}\mathcal{K}^T\diag(\mathcal{K}\mathcal{K}^T)) = \sum_{j=1}^M \|K_{j,\cdot}\|_2^4$.\\
Plugging this back in, we finally deduce that
\begin{equation}
    L_{nc} = L_{c} + \sum_{i=1}^N \|\mathcal{K}_{\cdot,i} \|_2^4 -
\sum_{j=1}^M  \|\mathcal{K}_{j,\cdot} \|_2^4,
\end{equation}
concluding the proof.

\end{proof}

\begin{lemma}
    If embeddings are normalized such that $\forall i,\; \|\mathcal{K}_{\cdot,i}\|_2 = a$ we have 
    \begin{equation}
      \frac{N^2}{M}a^4 \leq \sum_{j=1}^M\|\mathcal{K}_{j,\cdot} \|_2^4 \leq N^2 a^4.
    \end{equation}
     Conversely, if dimensions are normalized such that $\forall j,\; \|\mathcal{K}_{j,\cdot}\|_2 = a$ we have 
    \begin{equation}
      \frac{M^2}{N}a^4 \leq \sum_{i=1}^N\|\mathcal{K}_{\cdot,i} \|_2^4 \leq M^2 a^4.
    \end{equation}
\end{lemma}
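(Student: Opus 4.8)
The plan is to reduce both chains of inequalities to elementary estimates on a vector of nonnegative reals with a prescribed sum. The only structural ingredient is the identity $\|\mathcal{K}\|_F^2 = \sum_{i=1}^N \|\mathcal{K}_{\cdot,i}\|_2^2 = \sum_{j=1}^M \|\mathcal{K}_{j,\cdot}\|_2^2$, which holds because the squared Frobenius norm is the sum of squares of all entries, counted either column by column or row by row.

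First I would handle the case where the embeddings (columns) are normalized, $\|\mathcal{K}_{\cdot,i}\|_2 = a$ for all $i$. Then the identity above gives $\sum_{j=1}^M r_j = Na^2$ where $r_j := \|\mathcal{K}_{j,\cdot}\|_2^2 \ge 0$, and the target quantity is $\sum_{j=1}^M r_j^2$. For the lower bound, Cauchy--Schwarz (equivalently, the comparison between the quadratic and arithmetic means) gives $\sum_{j=1}^M r_j^2 \ge \tfrac{1}{M}\big(\sum_{j=1}^M r_j\big)^2 = \tfrac{N^2}{M}a^4$. For the upper bound, nonnegativity of the $r_j$ gives $\big(\sum_{j=1}^M r_j\big)^2 = \sum_{j=1}^M r_j^2 + \sum_{j\ne k} r_j r_k \ge \sum_{j=1}^M r_j^2$, hence $\sum_{j=1}^M r_j^2 \le (Na^2)^2 = N^2 a^4$. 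The converse statement, with the dimensions (rows) normalized to $a$, is obtained by swapping the roles of $M$ and $N$: now $\sum_{i=1}^N \|\mathcal{K}_{\cdot,i}\|_2^2 = Ma^2$, and the same two inequalities applied to $c_i := \|\mathcal{K}_{\cdot,i}\|_2^2$ yield $\tfrac{M^2}{N}a^4 \le \sum_{i=1}^N c_i^2 \le M^2 a^4$.

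I do not anticipate a genuine obstacle here; the argument is a two-line application of standard inequalities once the Frobenius-norm bookkeeping is set up. The one point worth recording is that all four bounds are tight: the lower bounds are attained exactly when the $r_j$ (resp.\ the $c_i$) are all equal, i.e.\ when $\mathcal{K}$ has constant-magnitude entries, and the upper bounds when a single $r_j$ (resp.\ a single $c_i$) carries the whole sum, i.e.\ when each row (resp.\ each column) has exactly one nonzero entry --- which is precisely the pair of collapsed configurations referred to in the discussion following the lemma, and what then feeds into Corollary~\ref{cor:bounds} via Theorem~\ref{thm:equivalence}.
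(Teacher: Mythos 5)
Your proof is correct and follows essentially the same route as the paper: the upper bound via expanding $\bigl(\sum_j \|\mathcal{K}_{j,\cdot}\|_2^2\bigr)^2$ and dropping the nonnegative cross terms, and the lower bound via Cauchy--Schwarz (which is the same estimate as the paper's convexity-of-$x^2$ argument), both anchored on the Frobenius-norm identity $\|\mathcal{K}\|_F^2 = Na^2$ (resp.\ $Ma^2$). Your added remark on when the bounds are attained matches the paper's discussion following the lemma.
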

\begin{proof}
We start with the first set of inequalities. Since $\forall i,\; \|\mathcal{K}_{i,\cdot} \|_2^2 \geq 0$ we have
\begin{equation}
    \sum_{j=1}^M\|\mathcal{K}_{j,\cdot} \|_2^4 \leq \left(\sum_{j=1}^M\|\mathcal{K}_{j,\cdot} \|_2^2 \right)^2 = \| \mathcal{K} \|_F^4 =N^2 a^4.
\end{equation}
Which gives us our upper bound.
For the lower bound, using the convexity of the function $f : x \rightarrow x^2$ we obtain
\begin{equation}
    \frac{1}{M}\sum_{j=1}^M\|\mathcal{K}_{j,\cdot} \|_2^4 \geq \left( \frac{1}{M} \sum_{j=1}^M\|\mathcal{K}_{j,\cdot} \|_2^2 \right)^2 = \frac{N^2}{M^2} a^4.
\end{equation}
Combining those two inequalities gives us the desired bounds.\\

For the second set of inequalities, we follow the same reasoning and use the fact that in this scenario $\|\mathcal{K}\|_F^2 = M a^2$ giving us the aforementioned bounds and concluding the proof.

\end{proof}

\section{Training procedure}\label{sec:training_procedure}

For training, we follow common procedure and use a ResNet-50 backbone~\citep{he2016resnet}, with the LARS~\citep{you2017lars} optimizer. We use by default a base learning rate of $0.3$ and compute the effective learning rate as $lr = base\_lr \times \frac{batch\_size}{256}$. We also use a momentum of $0.9$ and weight decay of $10^{-6}$. The learning rate follows a cosine annealing schedule after a 10-epoch linear warmup. We train for $100$ epochs in all of our experiments.\\
For data augmentation, we follow the protocol of BYOL~\citep{grill2020byol} which is as follows
\begin{table}[!h]
  \caption{Image augmentation parameters, taken from~\citep{grill2020byol}.}
  \centering
  \begin{tabular}{lcc}
    \toprule
    Parameter & View 1 & View 2    \\
    \midrule
     Random crop probability             & $1.0$ & $1.0$ \\
     Horizontal flip probability         & $0.5$ & $0.5$ \\
     Color jittering probability         & $0.8$ & $0.8$ \\
     Brightness adjustment max intensity & $0.4$ & $0.4$ \\
     Contrast adjustment max intensity   & $0.4$ & $0.4$ \\
     Saturation adjustment max intensity & $0.2$ & $0.2$ \\
     Hue adjustment max intensity        & $0.1$ & $0.1$ \\
     Grayscale probability               & $0.2$ & $0.2$ \\
     Gaussian blurring probability       & $1.0$ & $0.1$ \\
     Solarization probability.           & $0.0$ & $0.2$ \\
    \bottomrule
  \end{tabular}
\end{table}

Each experiment was run on $8$ Nvidia V100 GPUs, with $32$GB of memory each, and took around $24$ hours to complete.

While this was our base experimental protocol, it was adapted for each method, mostly by changing method-specific hyperparameters as well as the learning rate,
confer supplementary section~\ref{sec:all-perfs} for the exact hyperparameters used for each experiment.
The Pytorch pseudocode for VICReg-exp and VICReg-ctr is also available in supplementary section~\ref{sec:algs}.

\section{Online linear probe}\label{sec:probe}

As previously discussed, to evaluate our experiments, we relied on the use of a linear classifier that is trained jointly with our main network. This means that it is trained on suboptimal representations and stronger augmentations compared to what is typically done for linear evaluation. Even though these two approaches seem closely related, we are interested in finding how well they are correlated.

To do so, we trained a linear evaluation on VICReg and VICreg-exp with a projector architecture of $8192-8192-d,\; d\in [256,512,1024,2048,8192]$ using the following protocol. We train the linear classifier on frozen representations for 100 epochs with a batch size of 1024 using the SGD optimizer with a base learning rate $0.25$ (for VICReg) or $1.4$ (for VICReg-exp), momentum $0.9$, weight decay $10^{-6}$ and using a cosine annealing learning rate scheduler. We compute the learning rate as $lr = base\_lr \times \frac{batch\_size}{256}$. For augmentations, we follow standard procedure and use random cropping with a scale between $0.08$ and $1$ with an image size of $224\times 224$ and horizontal flip with a probability $0.5$ during training. For evaluation, we do a center crop.

\begin{table}[!h]
  \caption{Relationship in performance between the online linear probe and the offline linear classifier. We used VICReg and an expander with architecture $8192-8192-d$.}
  \label{tab:probe_vicreg}
  \centering
  \begin{tabular}{lccccc}
    \toprule
    Embedding dimension & 256 & 512 & 1024 & 2048 & 8192     \\
    \midrule
    Online top-1     & $65.01$ & $66.72$ & $68.06$ & $68.06$ & $68.13$     \\
    Offline top-1    & $65.11$ & $66.64$ & $67.96$ & $68.00$ & $68.02$    \\
    \bottomrule
  \end{tabular}
\end{table}

\begin{table}[!h]
  \caption{Relationship in performance between the online linear probe and the offline linear classifier. We used VICReg-exp and an expander with architecture $8192-8192-d$.}
  \label{tab:probe_vicreg-exp}
  \centering
  \begin{tabular}{lccccc}
    \toprule    
    Embedding dimension & 256 & 512 & 1024 & 2048 & 8192     \\
    \midrule
    Online top-1     & $65.24$ & $66.71$ & $67.86$ & $68.00$ & $67.93$     \\
    Offline top-1    & $65.30$ & $66.58$ & $67.83$ & $67.89$ & $68.18$    \\
    \bottomrule
  \end{tabular}
\end{table}

As we can see in table~\ref{tab:probe_vicreg} and~\ref{tab:probe_vicreg-exp}, the performance achieved by the offline classifier is extremely close to the performance of the online classifier. While the online classifier cost in compute is negligible, the linear evaluation is almost as long as the pretraining due to data loading bottlenecks and it requires a significant amount of learning rate tuning. This makes this online classifier a very appealing alternative since it demonstrates very correlated performances for a fraction of the computing cost.

Training a linear regression on those two sets of evaluations gives a model with a slope of $0.97$, an intercept of $2.1$, and an $R^2$ of $1.0$. It is worth noting that since most values are close to $68$, the fitting of linear regression on this data is sensitive to noise. Nonetheless, the low intercept, as well as the closeness of the slope to 1, confirm the negligible gap between the two evaluation methods that we previously intuited.

\section{Additional evidence of the similarity of learned representations\label{sec:more-results}}

The goal of this section is to provide additional empirical evidence of the similar properties of representations learned by sample-contrastive and dimension-contrastive methods. To this effect, we will evaluate representations with a $k$-nn classifier and compare their similarities with CKA~\citep{kornblith2019similarity}.

\begin{figure}[!htbp]
    \centering
    \includegraphics[width=\textwidth]{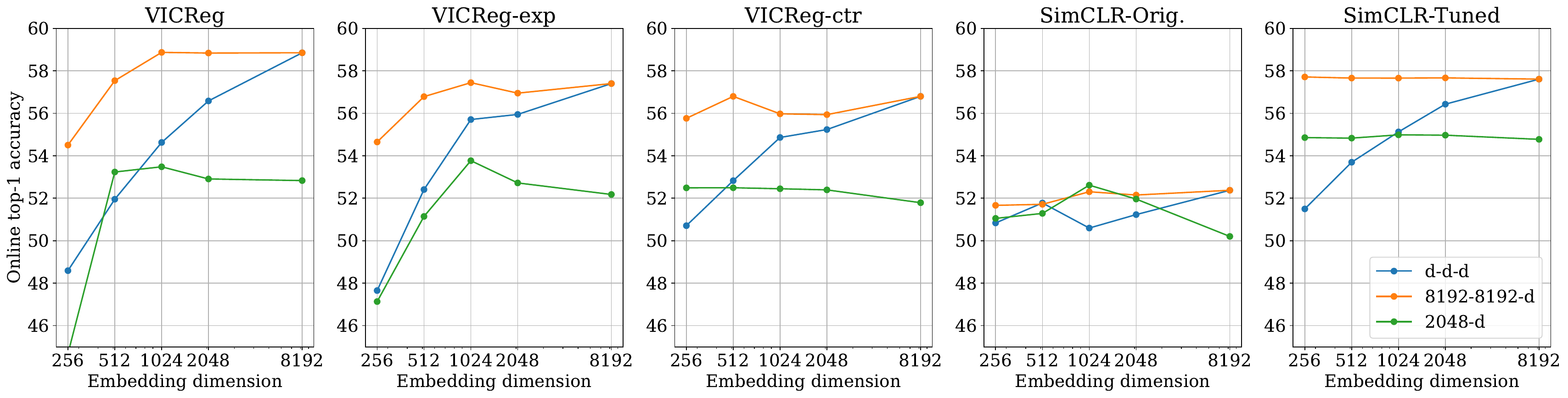}
    \caption{Reproduction of figure~\ref{fig:proj} with a $k$-nn classifier. We notice the same pattern as previously, where going from dimension-contrastive to sample-contrastive does not lead to a significant drop in performance.}
    \label{fig:knn}
\end{figure}
\textbf{$k$-nn evaluation.} In order to see if our previous results only validated similar performance in a linear classification setting, we will look at performance with $k$-nn classifiers which evaluate how well a metric is preserved instead of linear separability. We rely on the protocol of~\cite{bardes2021vicreg}, and use values of $k$ in $[1,5,10,20,50,200]$, with temperatures in $[0.05,0.07,0.1,0.2,0.5,1]$ for the weighting of the classifiers. We then look at the best performance achieved by all methods to give a comparison that is as fair as possible.

As we can see in figure~\ref{fig:knn}, we are able to retrieve behaviors similar to figure~\ref{fig:proj}, although results appear less stable for VICReg-exp and VICReg-ctr. Nonetheless, looking at the transition VICReg-exp $\rightarrow$ VICReg-ctr we can see that the peak performance is still preserved, further validation our results for these methods were the dimension-contrastive and sample-contrastive natures are isolated. Similarly as for linear evaluation, the original implementation of SimCLR performs significantly worse than other methods, but our tuned SimCLR can recover the performance of VICReg with a $5.5$ point increase in performance. This highlights how the practical implications of our results extend beyond linear classification, while further validating our theory.

\textbf{CKA.} CKA (Centered Kernel Alignment)~\citep{kornblith2019similarity} is a powerful tool to study the similarities between representations, which relies on HSIC (Hilbert-Schmidt Independance Criterion)~\citep{gretton2005measuring} with a given kernel. We will use a linear kernel for simplicity. For each method, we will study three different experiments that reached the same level of performance to measure both intra- and inter-method correlation between representtions. We also consider a random network to give a lower bound of what we can expect.

\begin{figure}[!htbp]
    \centering
    \includegraphics[width=0.8\textwidth]{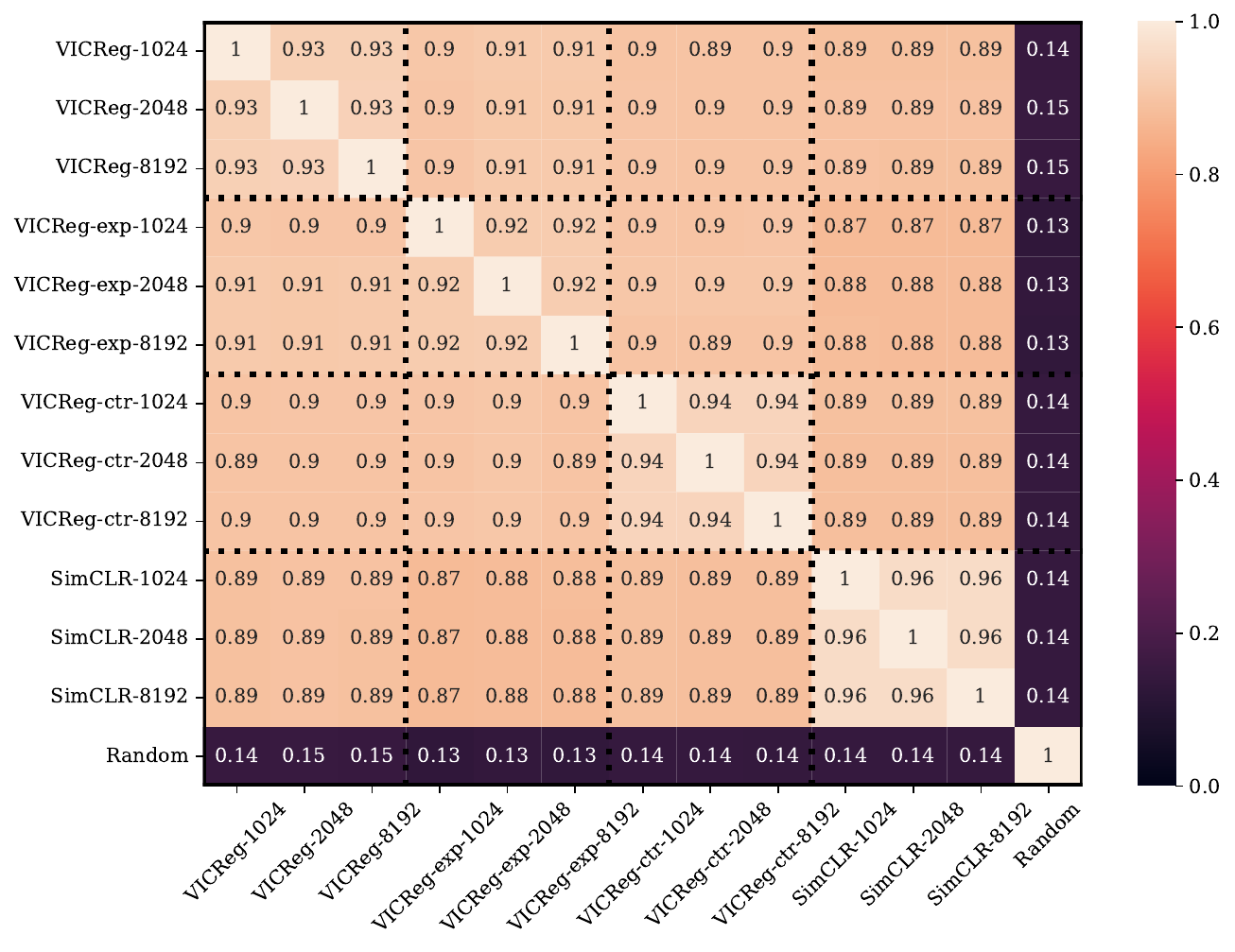}
    \caption{
CKA of the learned representations on all considered methods. For a given method, we use  experiments with different output dimensions (1024,2048,8192) that achieved equivalent performance.}
    \label{fig:cka}
\end{figure}

As we can see in figure~\ref{fig:cka}, all of the learned representations are highly correlated, where intra- and inter-methods CKA are very similar. This both shows that different self-supervised methods, whether dimension-contrastive or sample-contrastive, provide consistent representations over different runs and also all learn similar representations. These results contrast with the findings in Figure 2 from~\cite{gwilliam2022beyond} where they found that different methods lead to representations with low CKA. We believe that their findings can be explained by different training setups between methods since the models used were trained with different projectors and data augmentation.

Both of these analyses help cement our results, where we can now say that through the lens of linear classification, $k$-nn classification, and CKA, all studied self-supervised methods produce extremely similar representations.

\section{Impact of the similarity measure on SimCLR}\label{sec:sim-choice}

While SimCLR uses cosine similarity to push away negative pairs, we will look at what happens when we use the square or absolute value of cosine similarities, as in SimCLR-sq or SimCLR-abs. 

\begin{figure}[!htbp]
    \centering
    \includegraphics[width=\textwidth]{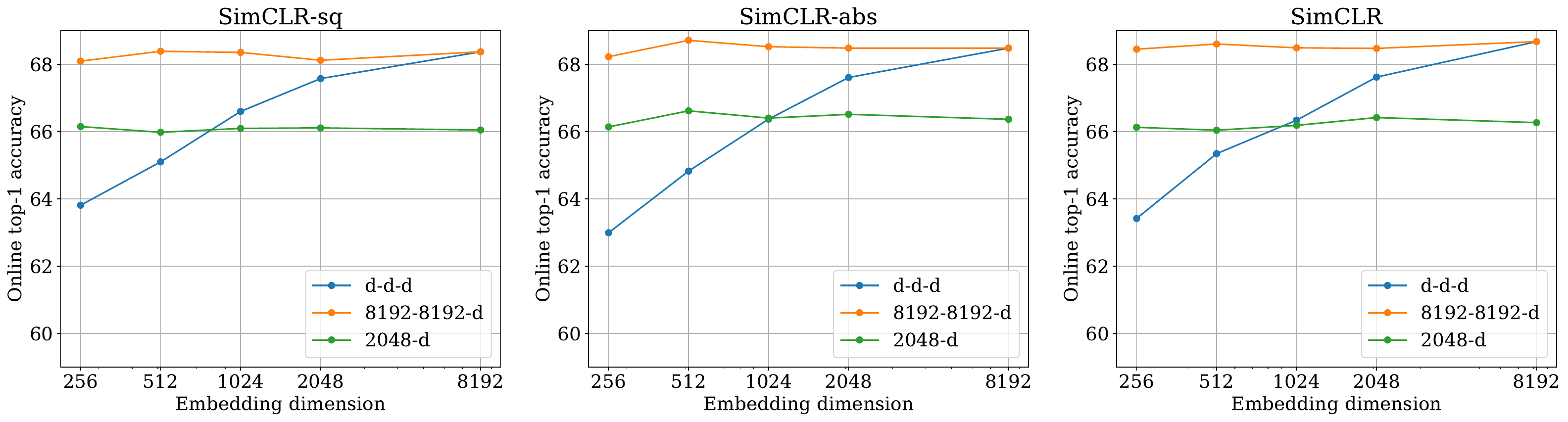}
    \caption{Influence of using squared or absolute values of the cosine similarities for VICReg, with different projector architectures.}
    \label{fig:sim-choice}
\end{figure}

As we can see in figure~\ref{fig:sim-choice}, the use of the squared or absolute values of the similarities did not impact the performance on image classification, it even improved slightly with a large projector when using the absolute values, achieving $68.7\%$ top-1 accuracy.

\begin{figure}[!htbp]
    \centering
    \includegraphics[width=\textwidth]{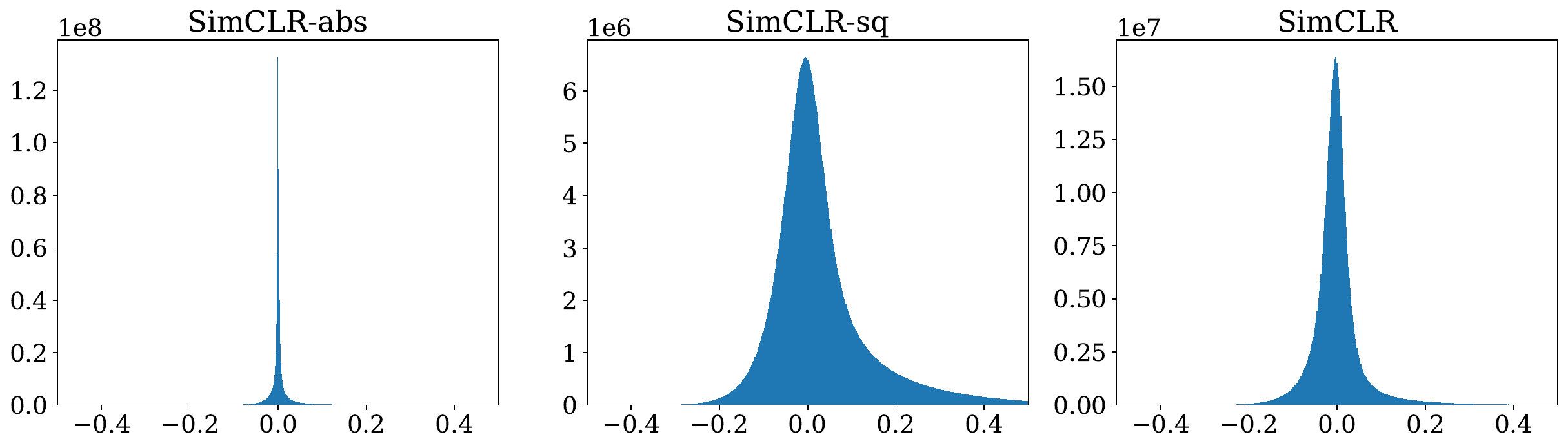}
    \caption{Histogram of cosine similarities for negative pairs in SimCLR-abs, SimCLR-sq and SimCLR. }
    \label{fig:hists-simclr}
\end{figure}

\begin{figure}[!htbp]
    \begin{subfigure}{0.48\textwidth}
        \centering
        \includegraphics[width=\textwidth]{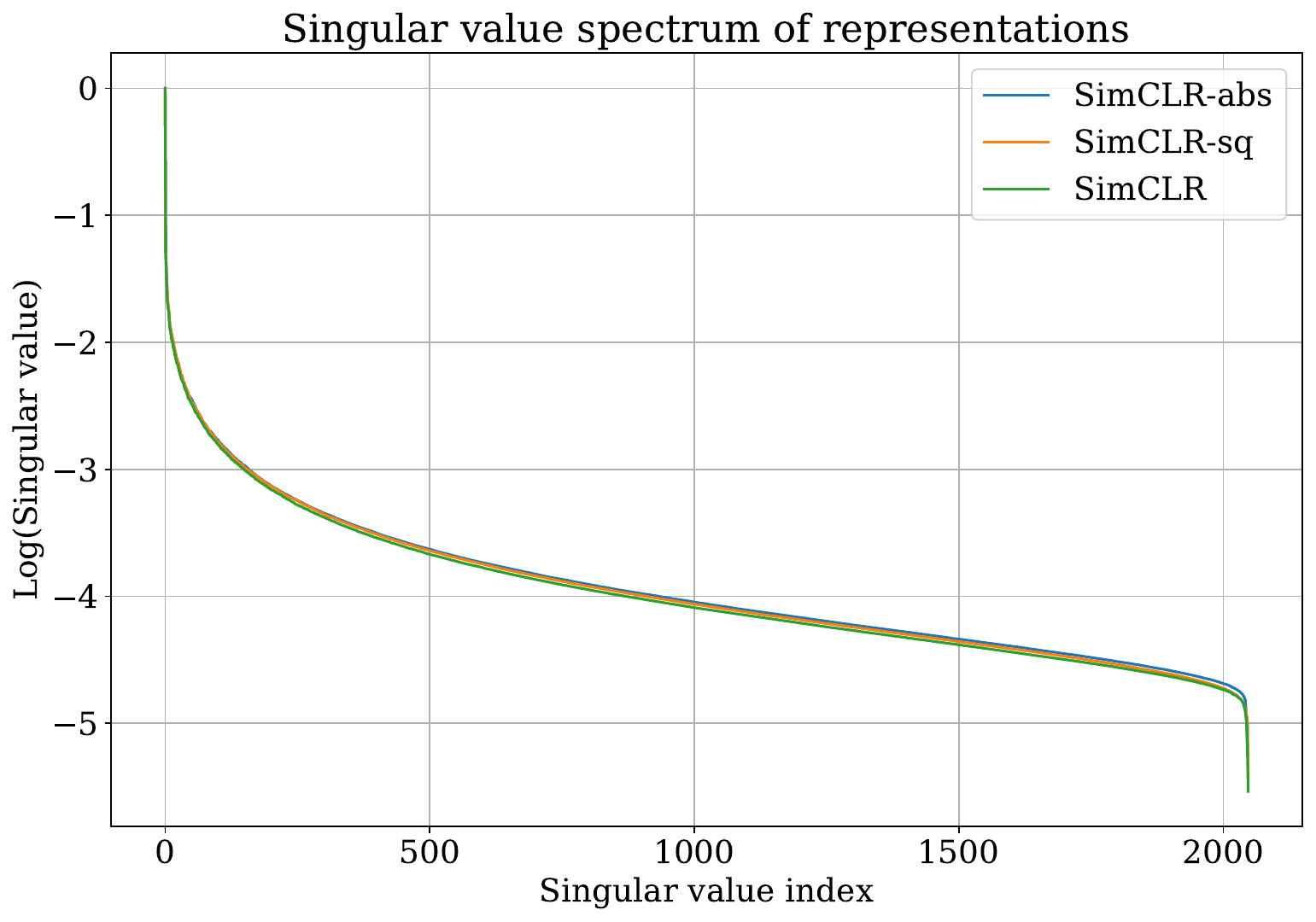}
        \caption{Representations}
    \end{subfigure}
    \hfill
    \begin{subfigure}{0.48\textwidth}
        \centering
        \includegraphics[width=\textwidth]{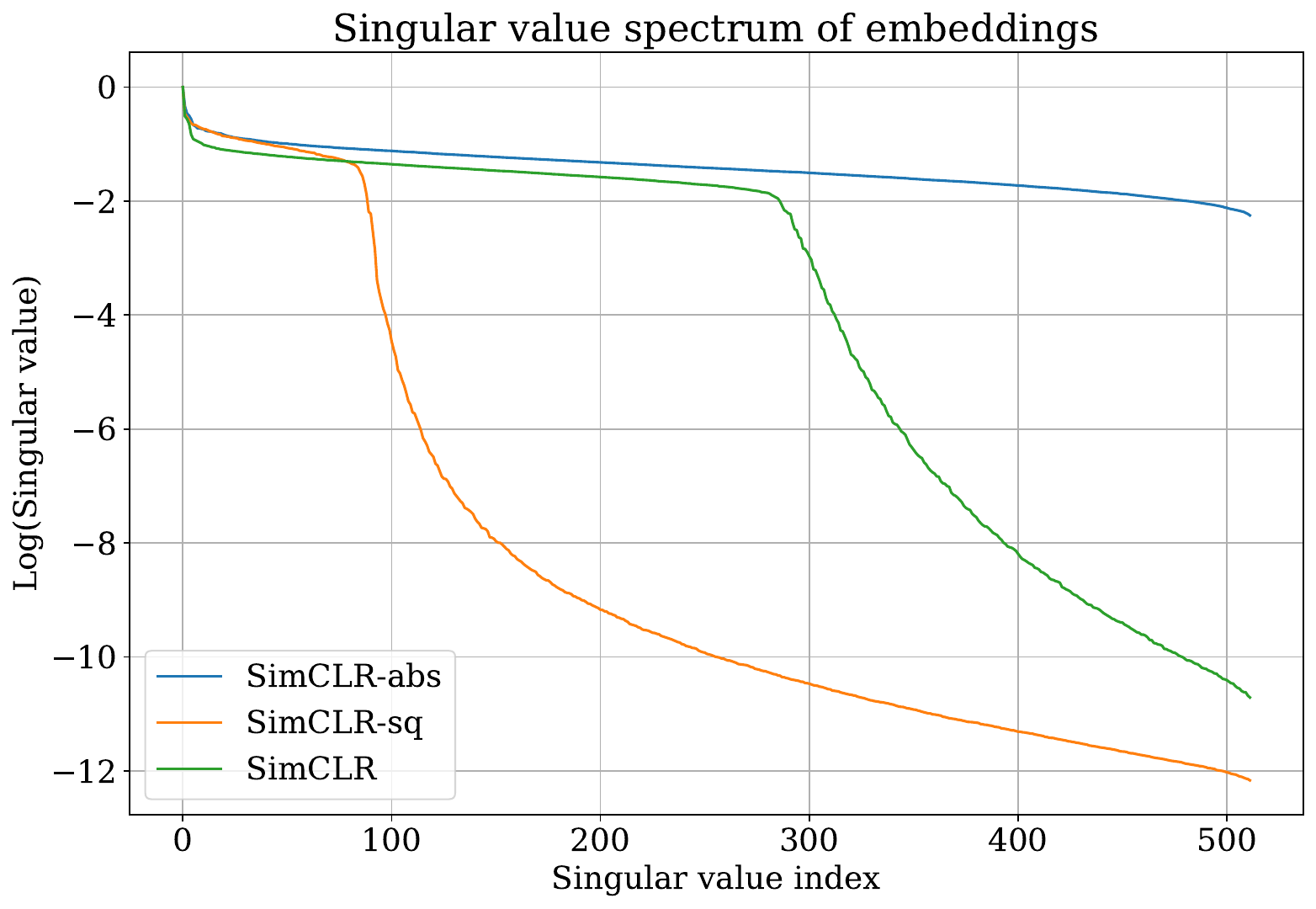}
        \caption{Embeddings}
    \end{subfigure}
    
    \caption{Singular value distribution of the embeddings and representations computed on the training set of ImageNet for SimCLR, SimCLR-abs and SimCLR-sq. All methods use 512 dimensional embeddings. }
    \label{fig:dim_collapse-simclr}
\end{figure}

As we can see in figure~\ref{fig:hists-simclr}, for all three methods we obtain a distribution of cosine similarities that is centered at 0, but they all have very different standard deviations. The main culprit of this difference is dimensional collapse, as studied extensively in~\cite{jing2022understanding}. We study this behavior in figure~\ref{fig:dim_collapse-simclr}, where we see that the three methods show different levels of collapse. While SimCLR-abs appears to have an almost full rank embedding matrix, we can see some collapse at around 256 dimensions for SimCLR, and 64 for SimCLR-sq. Per proposition~\ref{prop:distrib-dot-infonce}, we know that with a perfect optimization of SimCLR's criterion, we should observe a variance of $1/D$ for the cosine similarities, if we have $D$-dimensional embeddings. However this is not the ambient dimension but the embeddings' dimension, and so when combining this result with the dimensional collapse, we clearly see that SimCLR-abs should have less variance as it has the least amount of collapse, and SimCLR-sq the highest variance as it has the most amount of collapse. Since this is what we observe in practice, these results are coherent with the three methods producing similar cosine similarities distributions, albeit with different standard deviations depending on the amount of dimensional collapse.

\section{Row and column norms interplay}\label{sec:norms}

While we provided bounds that apply to any matrix in lemma~\ref{lem:bounds}, in practice embedding matrices have a particular structure and one can wonder where the norms are in between the relatively distant bounds.\\
To study this we took 1024 images from ImageNet, computed the corresponding embedding matrices, and then l2-normalized the rows or columns.
\begin{table}[!ht]
  \caption{The empirical interplay between embedding matrix norms under row- or column-wise l2-normalization for different methods and projector architectures. We abbreviate thousands with k and millions with M. The experiment "Random" indicates a randomly initialized network.}
  \label{tab:norms}
  \centering
  \begin{tabular}{llcccccc}
    \toprule
     \multirow{2}{*}{Experiment} & \multirow{2}{*}{Projector}  &    \multicolumn{3}{c}{Colum normalization} & \multicolumn{3}{c}{Row normalization} \\
    \cmidrule(r){3-5}    \cmidrule(r){6-8}
     &     & $\frac{N^2}{M}$ & $\sum \|\mathcal{K}_{j,\cdot} \|_2^4$ & $N^2$ &       $\frac{M^2}{N}$ & $\sum \|\mathcal{K}_{\cdot,i} \|_2^4$ & $M^2$\\
    \midrule
    VICReg & $8192-8192-8192$                 & 128  & 128.19 & 1M & 65k & 83k & 67M \\
    VICReg-exp & $8192-8192-8192$             & 128  & 128.26 & 1M & 65k & 95k & 67M \\
    VICReg-ctr & $8192-8192-512$              & 2048  & 2078 & 1M & 256 & 287 & 262k \\
    \multirow{2}{*}{SimCLR} & $8192-8192-512$ & 2048  & 2061 & 1M & 256 & 433.54 & 262k \\
     & $8192-8192-8192$                       & 128  & 129.43 & 1M & 65k & 113k & 67M \\
    Random & $8192-8192-8192$                 & 128  & 361.34 & 1M & 65k & 75k & 67M \\
    \bottomrule
  \end{tabular}
\end{table}

As we can see in table~\ref{tab:norms}, for every method, in any expansion or projection scenario, we are always close to the lower bound, deviating by a factor of 3 at most. This is significantly smaller than the factors $N$ or $M$ in lemma~\ref{lem:bounds} which are tight when making no assumptions on the embedding matrix $\mathcal{K}$. As previously discussed these extreme cases consist respectively of a constant matrix and one with only one non-zero element per row/column. It is logical that the embedding matrices that we have in practice are closer to a constant matrix, with a uniform spread of information, even though they still present some sparsity.\\
As such, for all practical concerns, the bounds are much closer in practice than they theoretically are. This means that the sample-contrastive and dimension-contrastive criteria will also be closer in practice.

\clearpage
\section{Impact of the projector capacity}\label{sec:proj_cap}

\begin{figure}[!htbp]
    \centering
    \includegraphics[width=\textwidth]{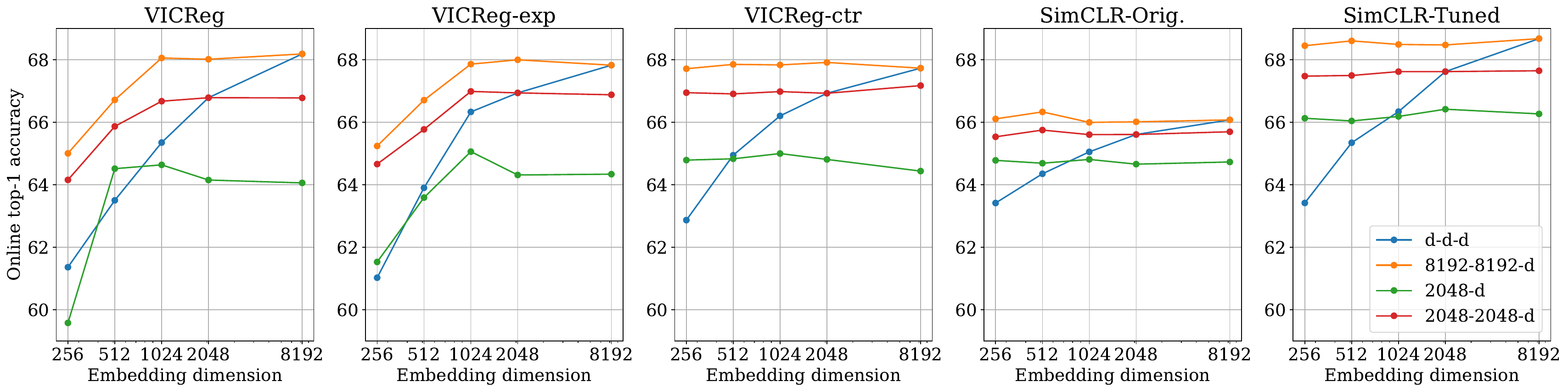}
    \caption{Online performance on ImageNet for VICReg, VICReg-exp, VICReg-ctr, and SimCLR with respect to embedding dimensions when changing the projector's architecture.}
    \label{fig:full_proj}
\end{figure}

As discussed in section~\ref{sec:practice}, the design of the projector plays a significant role in downstream performance. In figure~\ref{fig:full_proj}, we also overlay the results for a projector with architecture $2048-2048-d$ on top of the previously discussed ones. Such a projector offers similar behavior as an $8192-8192-d$ one, but with a bit lower performance. The drop in performance is especially noticeable in dimension-contrastive methods.

\begin{figure}[!htbp]
    \centering
    \includegraphics[width=0.9\textwidth]{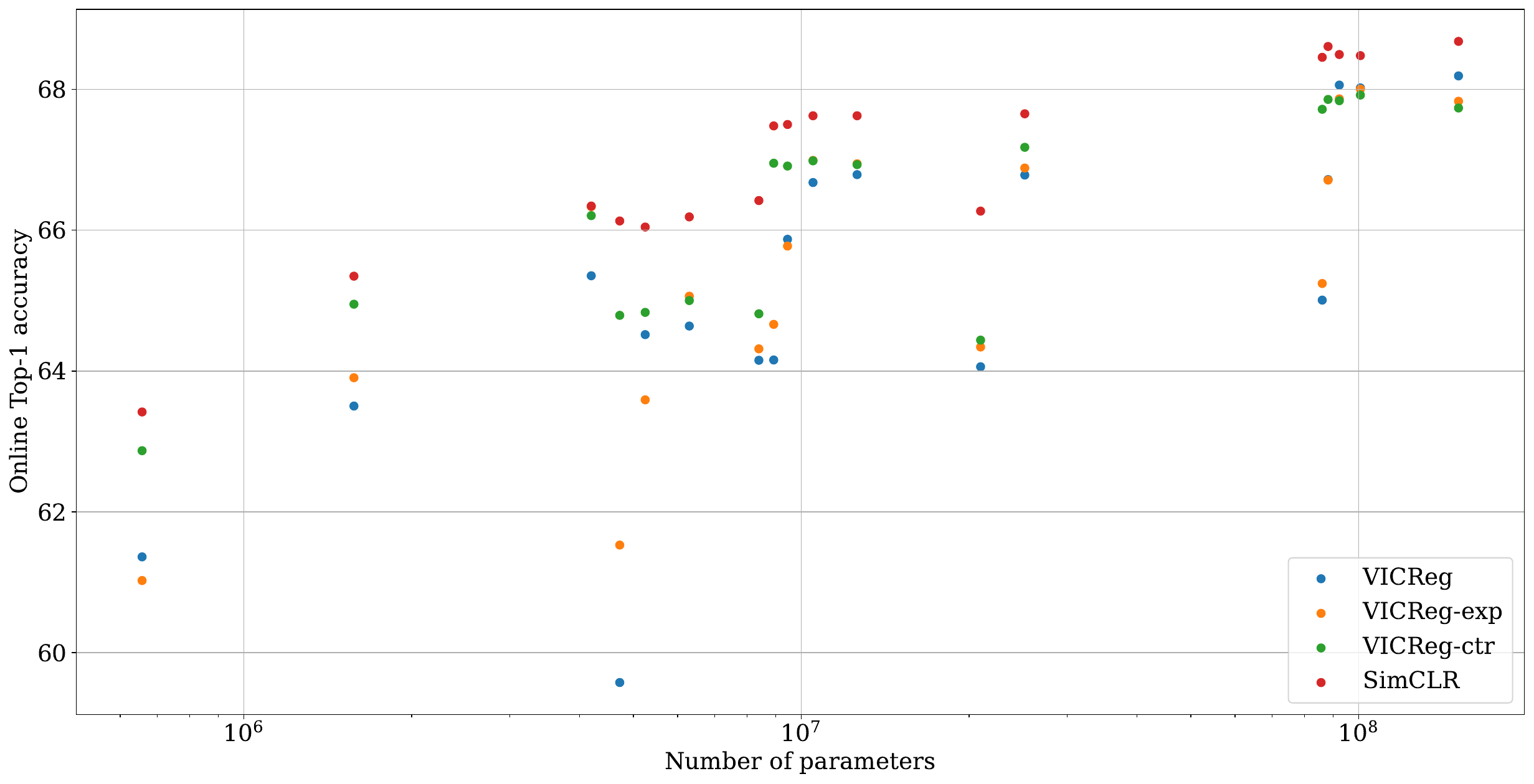}
    \caption{Online performance on ImageNet for VICReg, VICReg-exp, VICReg-ctr, and SimCLR with respect to the number of parameters in their projector.}
    \label{fig:params_perfs}
\end{figure}

As we can see in figure~\ref{fig:params_perfs}, if we take a look at the performance with respect to the number of parameters of the projector we can see a clear trend that indicates that performance is increased when increasing the number of parameters of the projector. This conclusion holds for all methods though there are some scenarios that are clear outliers. For example, for VICReg and VICReg exp we can see that with a $2048-256$ projector, the performance is significantly lower than expected.

While it would be interesting to see if this increase in performance saturates at some point, our largest projectors already have $151$ million parameters. Increasing it further quickly starts to become impractical due to memory constraints during training, and as such, we leave this study to future work.

Another aspect worth mentioning is that the increase in performance when increasing the number of parameters is not automatic. For example for VICReg, the scenario $2048-2048-1024$ achieves $66.68$\% top-1 for $10$ million parameters, but the scenario $8192-8192-256$ only achieves $65.01$\% even though it has $86$ million parameters. This drastic difference suggests that some care must be taken when designing the projector and that even though the number of parameters is important, the architecture in itself also is.

\section{Influence of loss function design on optimization quality\label{sec:optimization}}

As previously discussed, the introduction of VICReg-exp allows us to study the influence of the use of the LogSumExp operator in the repulsive force, and VICReg-ctr to study the difference between sample-contrastive and dimension-contrastive methods when comparing it to VICReg-exp. This enables us to quantify the impact of these design choices on the quality of the optimization process.

\begin{figure}[!t]
    \begin{subfigure}{\textwidth}
     \renewcommand\thesubfigure{a}
    \begin{subfigure}[b]{0.48\textwidth}
        \renewcommand\thesubfigure{i}
         \centering
         \includegraphics[width=\textwidth]{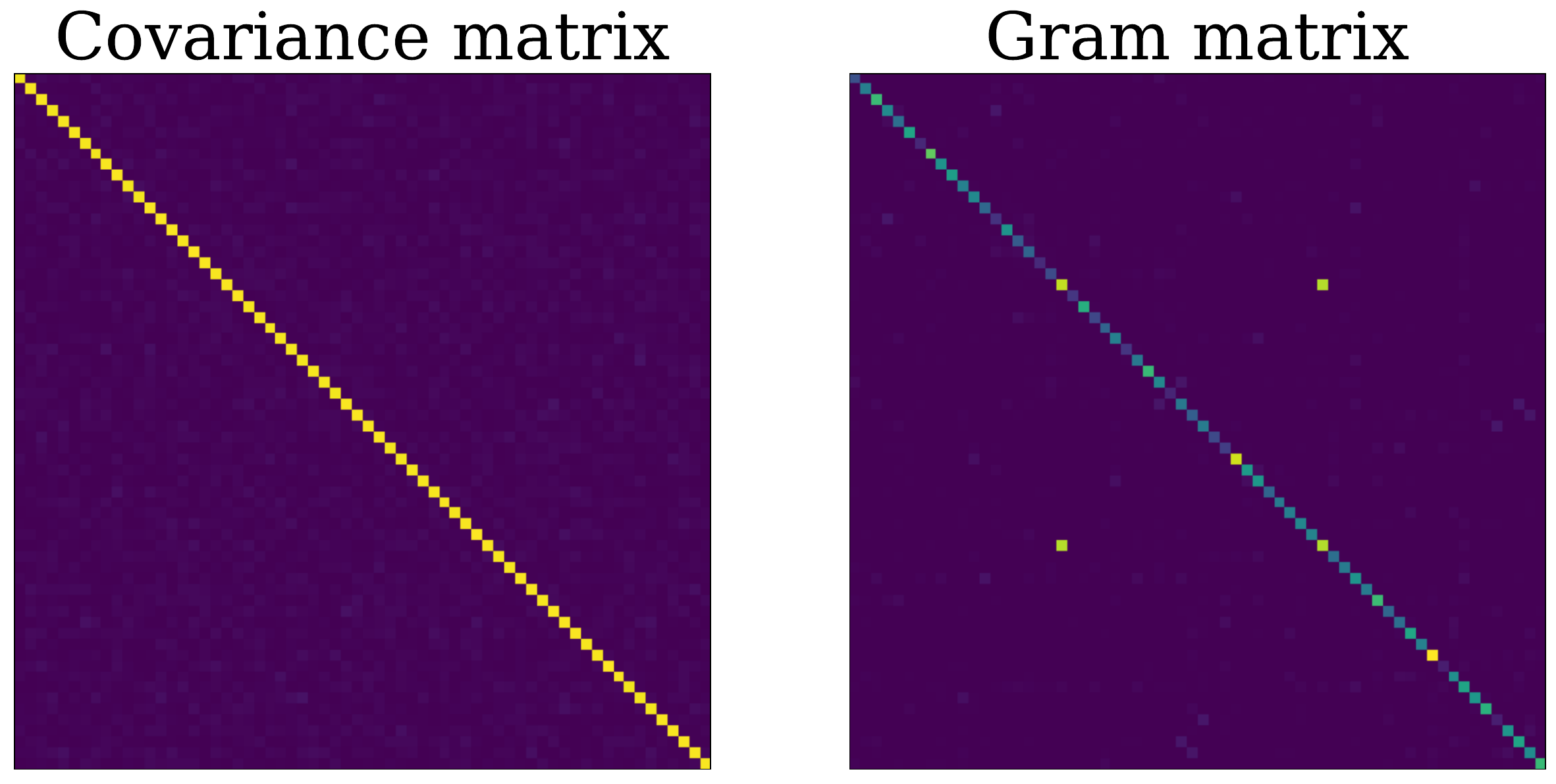}
         \caption{On the embeddings}
     \end{subfigure}
     \hfill
     \begin{subfigure}[b]{0.48\textwidth}
        \renewcommand{\thesubfigure}{ii}
         \centering
         \includegraphics[width=\textwidth]{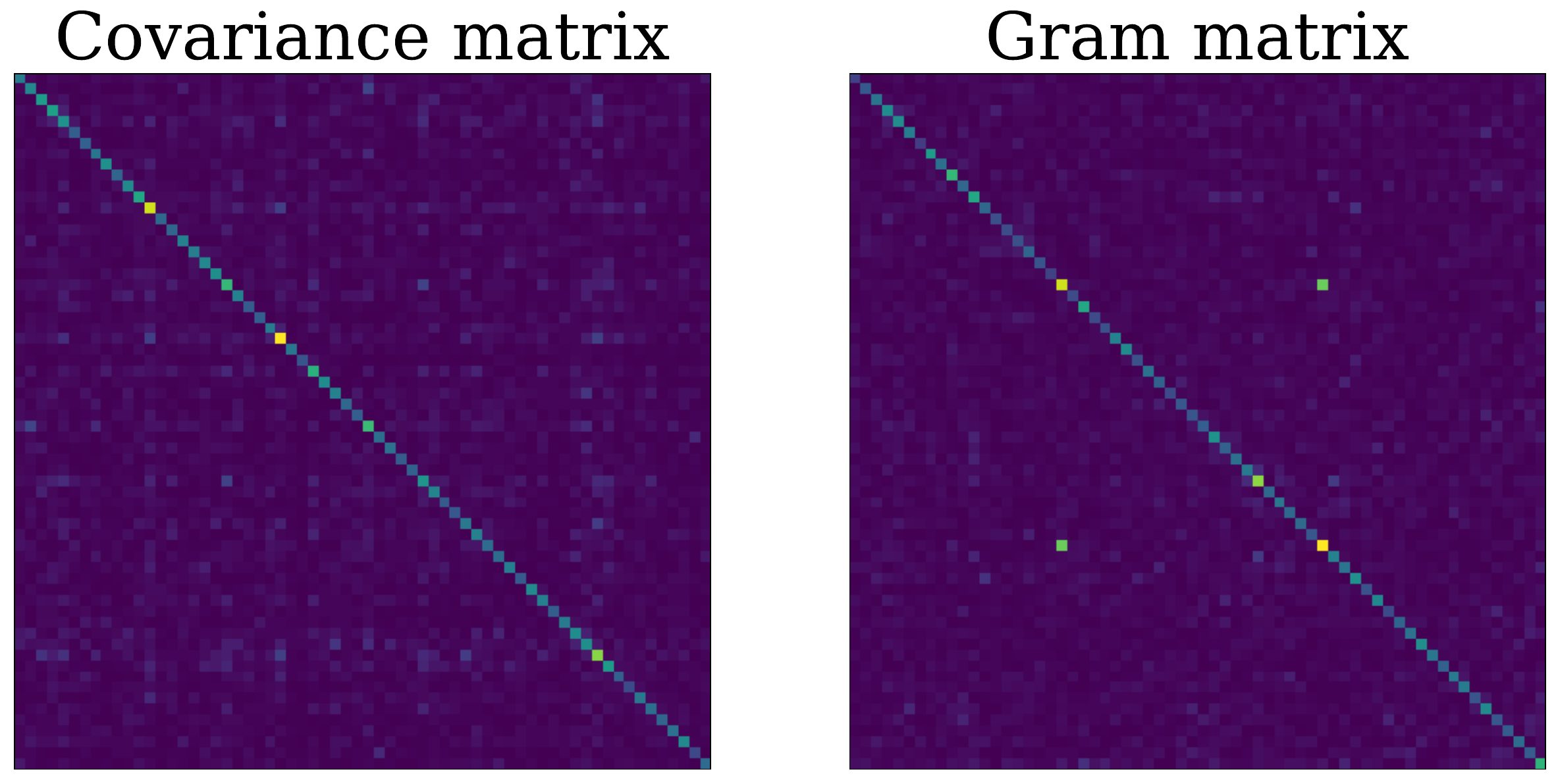}
        \caption{On the representations}
     \end{subfigure}
     \caption{VICReg}
    \end{subfigure}
    
    \begin{subfigure}{\textwidth}
    \renewcommand\thesubfigure{b}
        \begin{subfigure}[b]{0.48\textwidth}
         \renewcommand\thesubfigure{i}
         \centering
         \includegraphics[width=\textwidth]{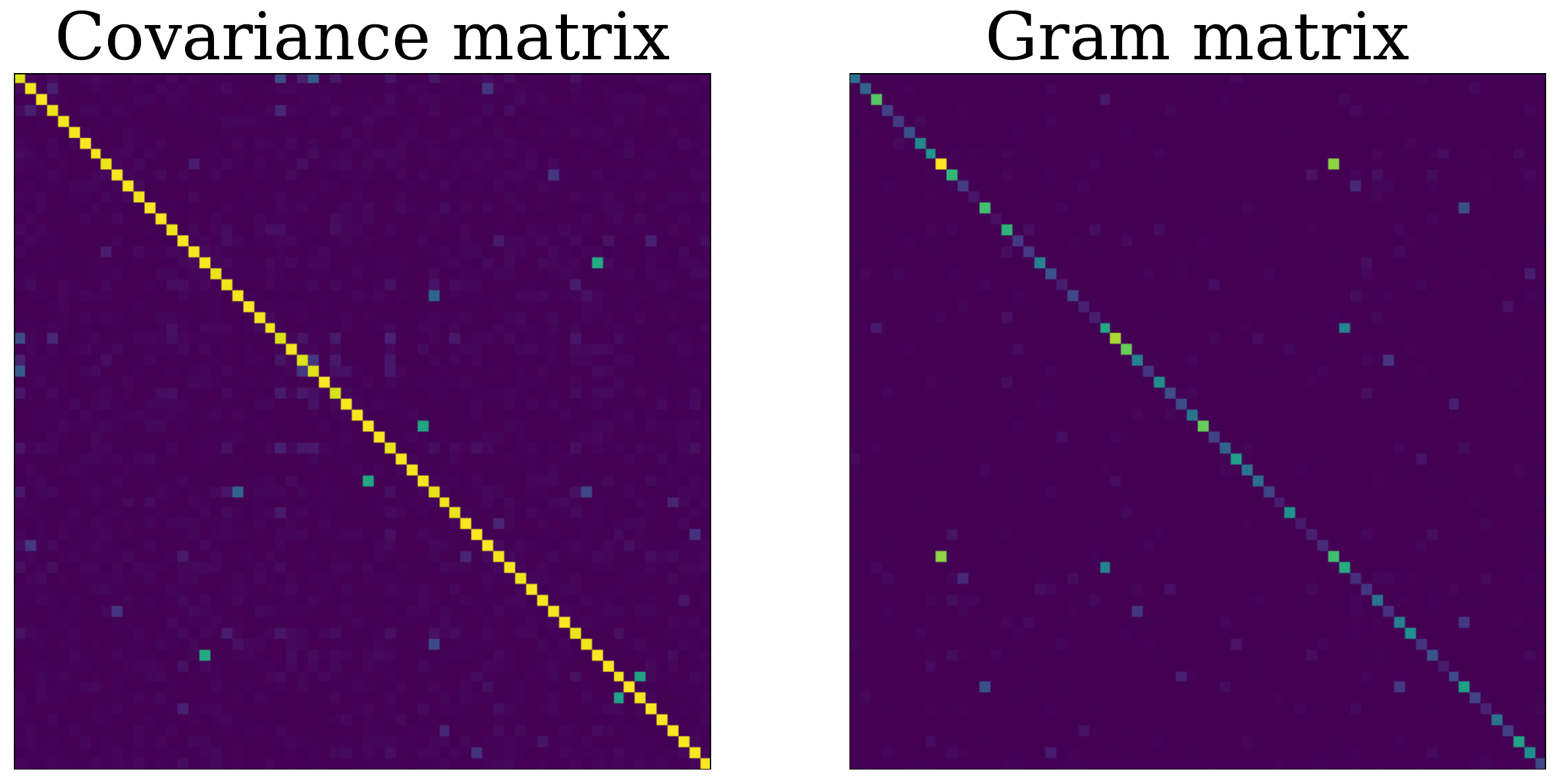}
         \caption{On the embeddings}
     \end{subfigure}
     \hfill
     \begin{subfigure}[b]{0.48\textwidth}
         \renewcommand\thesubfigure{ii}
         \centering
         \includegraphics[width=\textwidth]{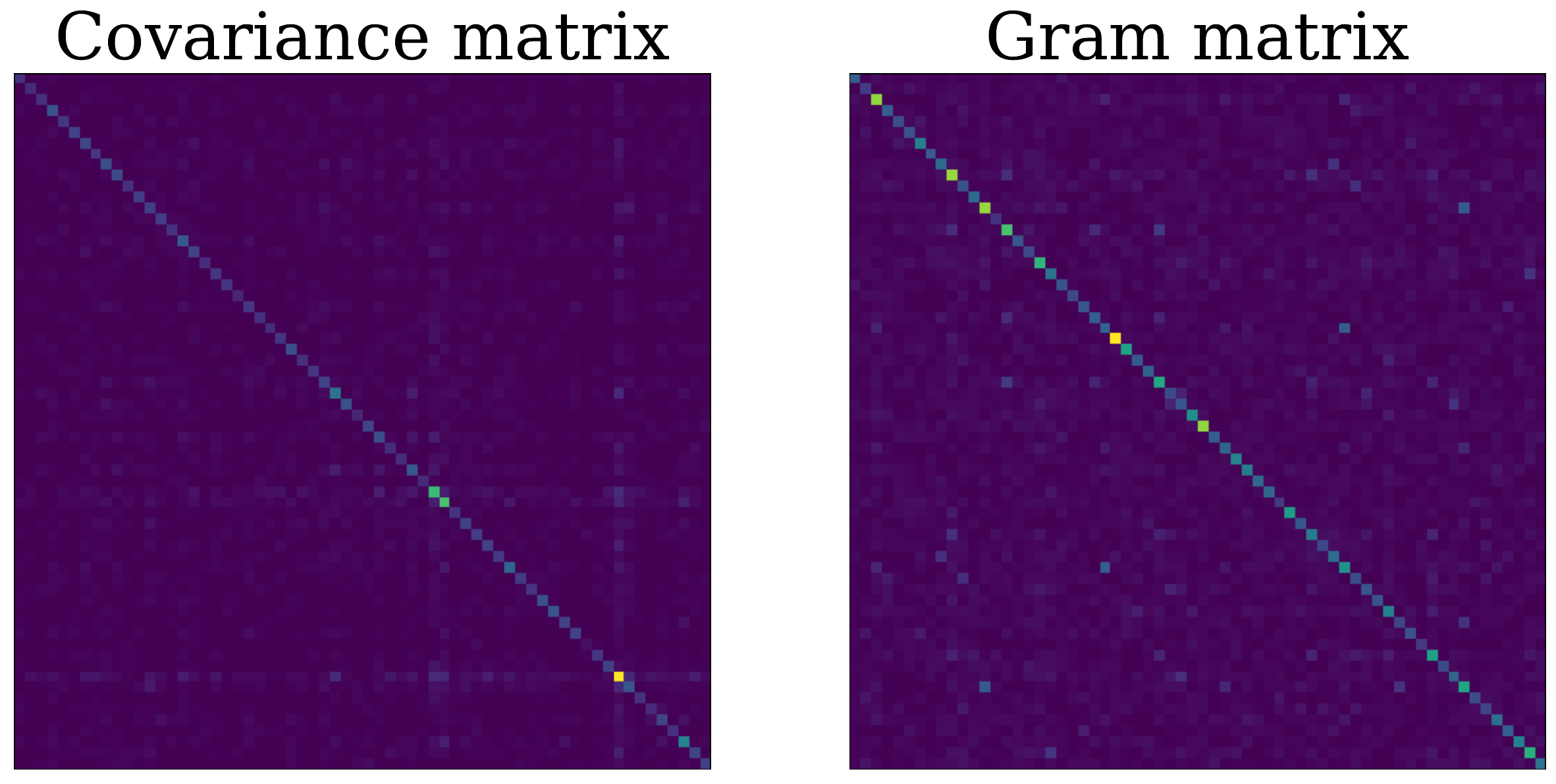}
        \caption{On the representations}
     \end{subfigure}
     \caption{VICReg-exp}
    \end{subfigure}
    \begin{subfigure}{\textwidth}
    \renewcommand\thesubfigure{c}
        \begin{subfigure}[b]{0.48\textwidth}
        \renewcommand\thesubfigure{i}
         \centering
         \includegraphics[width=\textwidth]{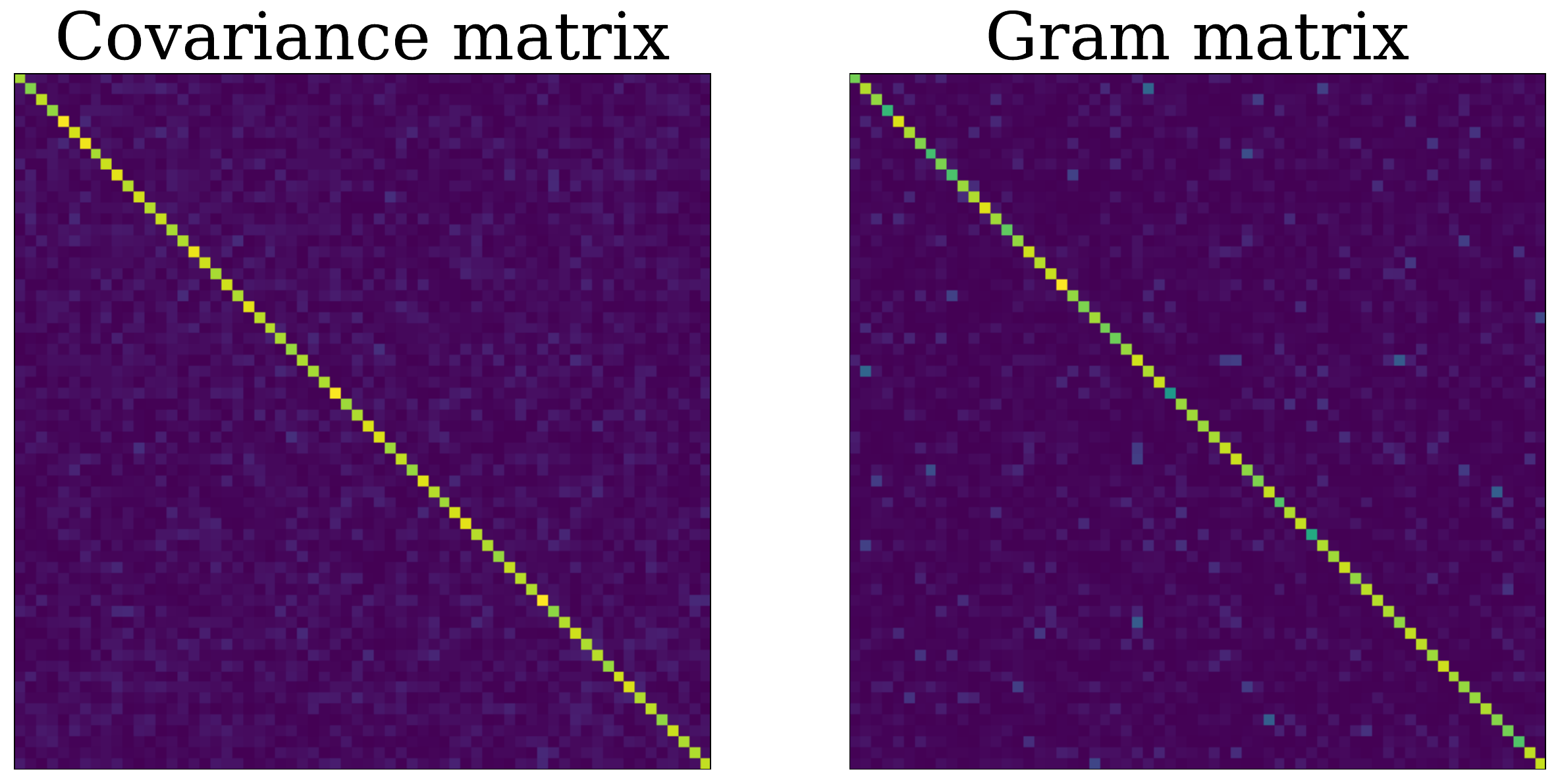}
         \caption{On the embeddings}
     \end{subfigure}
     \hfill
     \begin{subfigure}[b]{0.48\textwidth}
     \renewcommand\thesubfigure{ii}
         \centering
         \includegraphics[width=\textwidth]{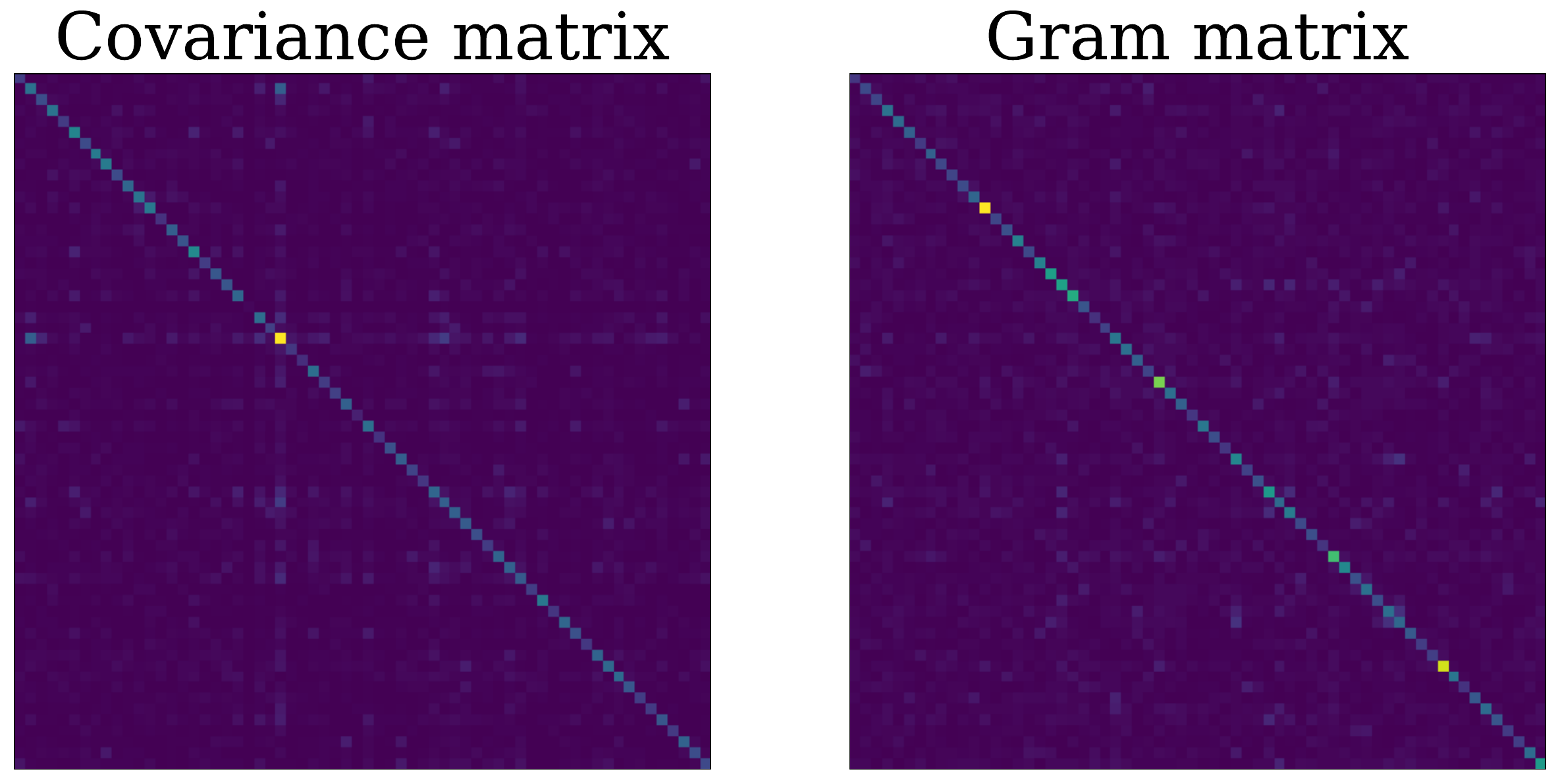}
        \caption{On the representations}
     \end{subfigure}
     \caption{VICReg-ctr}
    \end{subfigure}

    \begin{subfigure}{\textwidth}
    \renewcommand\thesubfigure{d}
        \begin{subfigure}[b]{0.48\textwidth}
        \renewcommand\thesubfigure{i}
         \centering
         \includegraphics[width=\textwidth]{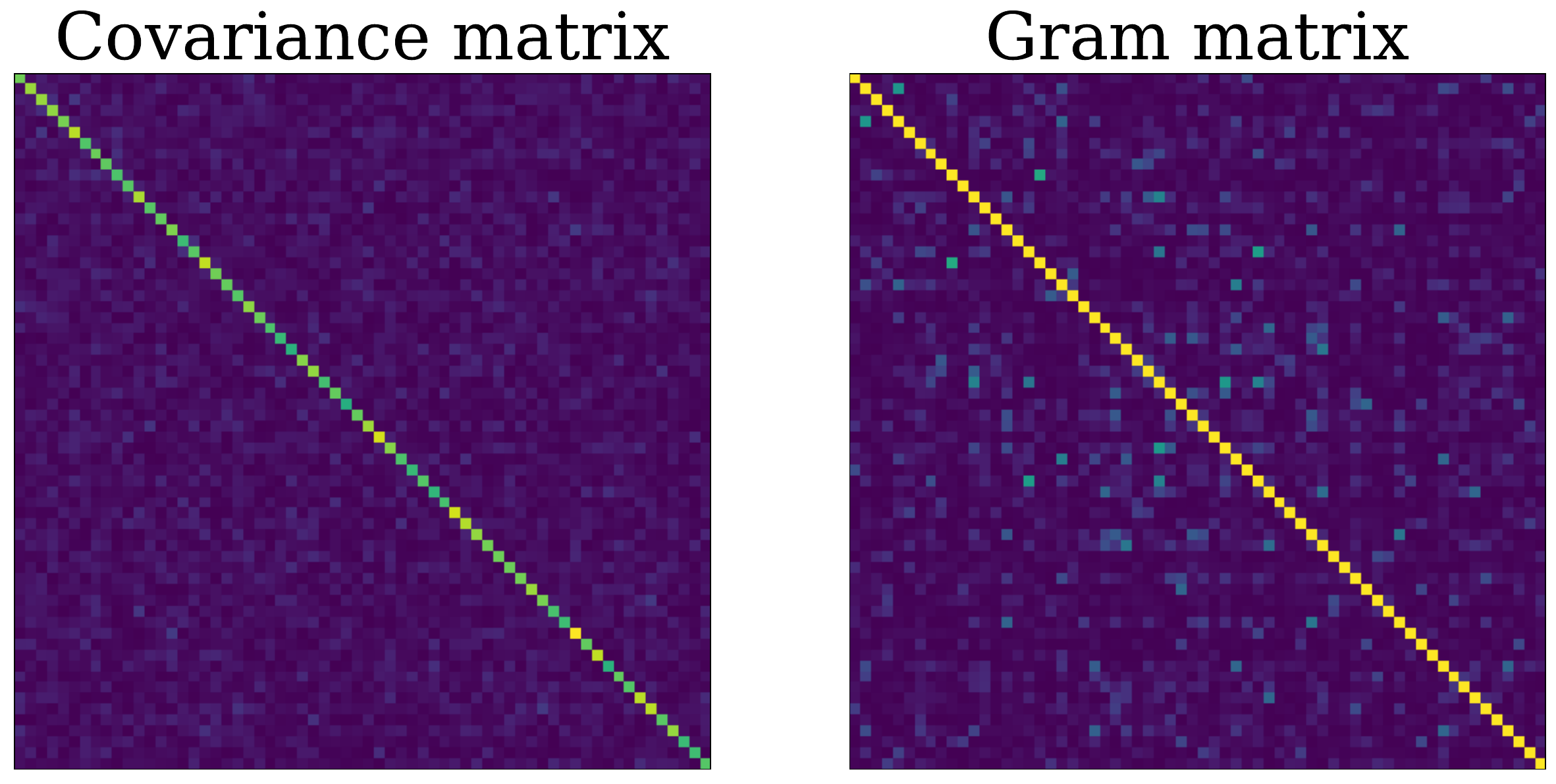}
         \caption{On the embeddings}
     \end{subfigure}
     \hfill
     \begin{subfigure}[b]{0.48\textwidth}
     \renewcommand\thesubfigure{ii}
         \centering
         \includegraphics[width=\textwidth]{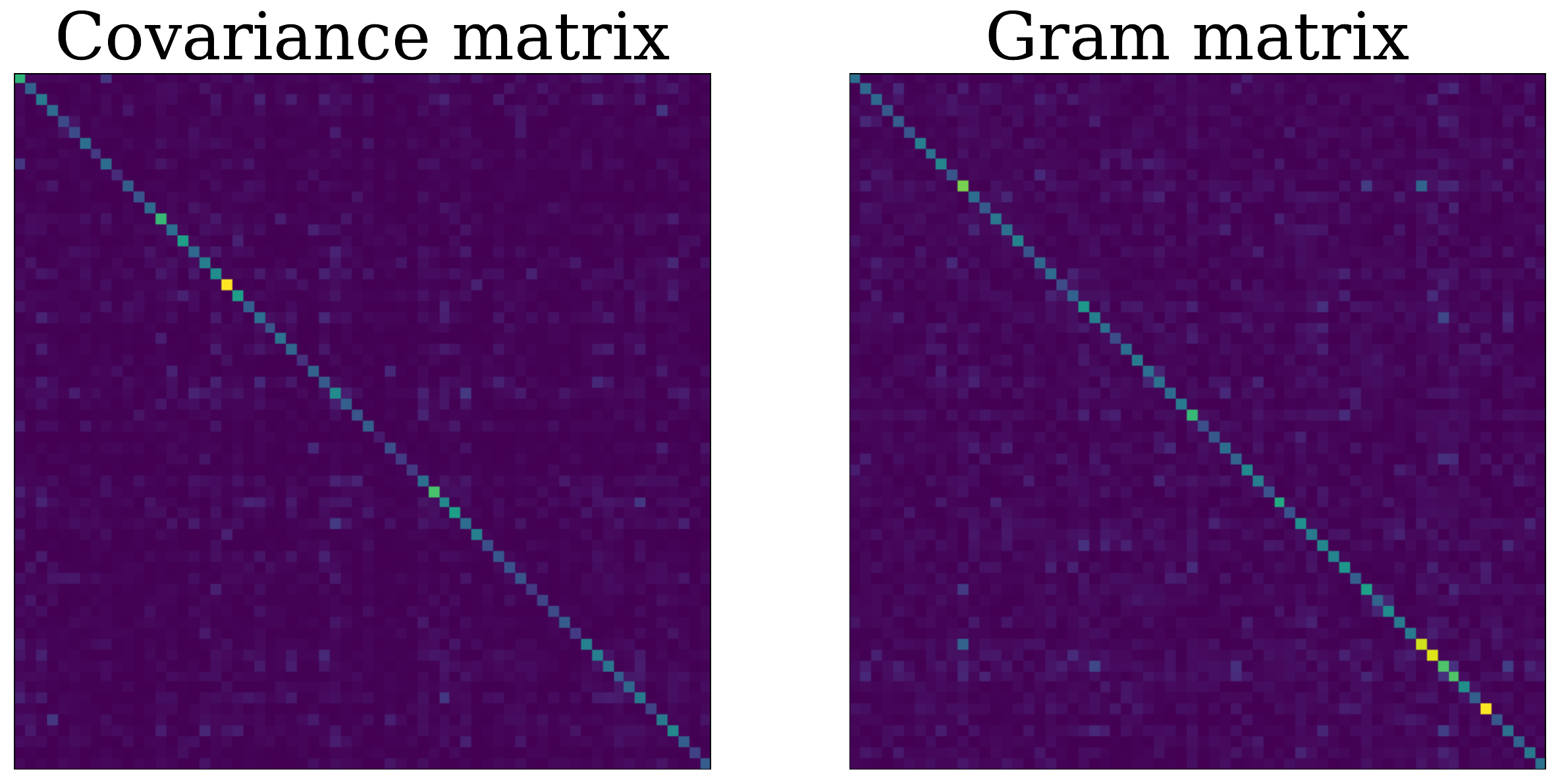}
        \caption{On the representations}
     \end{subfigure}
     \caption{SimCLR-Tuned}
    \end{subfigure}
        \caption{Covariance and similarity matrices on a random part of the ImageNet training set, using VICReg, VICReg-exp, VICReg-ctr, and SimCLR pretrained on ImageNet for 100 epochs. The covariance matrix is limited to the first 64 dimensions, while the Gram matrix is limited to the first 64 samples. In all cases, we used a projector with an output dimension of 2048, the same as the representation dimension.}
        \label{fig:sim_cov_vicreg_simclr}
\end{figure}

While a perfect optimization of the aforementioned criteria would lead to embeddings with similar properties for the covariance and Gram matrix, one can wonder how well they are optimized in practice and whether design choices have a significant impact.
To this effect we will look at the Gram and Covariance matrices after optimization, both on the embeddings to study the quality of the optimization process and on the representations to study the transferability of this process to the representations since they are used for downstream tasks. For the embeddings, we use the same normalization process as is used during training, and we center the representations to alleviate the fact that the last ReLU layer constrains them to the positive orthant. This centering on the representations is only done to make the visualization more interpretable.

As we can see in figure~\ref{fig:sim_cov_vicreg_simclr}, while VICReg penalizes the off-diagonal terms of the covariance matrix and not the Gram matrix, both matrices have off-diagonal terms that are significantly smaller than their diagonal counterparts.
Similarly for VICReg-exp, we can see that both the Gram and covariance matrices are dominated by their diagonal in the embedding space, though there is noise in the off-diagonal terms. This is due to the use of the LogSumExp, which as a smooth approximation of the max operator, will mostly penalize the largest values. On the other hand, using squared values will make them penalized by their absolute and not relative value.\\
We also observe the same behavior for VICReg-ctr and SimCLR, leading to Gram and covariance matrices that are dominated by their diagonal but are overall noisier than for VICReg and VICReg-exp. This suggests that the main culprit of this noise is indeed the LogSumExp but that the sample-contrastive nature of VICReg-ctr and SimCLR also played a role in creating it.

Looking at the representations, the differences between the methods start to fade. They all still produce Gram and covariance matrices that are dominated by their diagonal, but with some off-diagonal noise. Even though we could see a clear difference in the quality of the optimization in the embedding space, the similarity in the representation space makes it harder to interpret for practical scenarios. Indeed, we saw that all methods can be made to perform the same when evaluating the representations.

\clearpage
\section{Complete performance and hyperparameter tables}\label{sec:all-perfs}
\begin{table}[!h]
  \caption{Top-1 accuracy on ImageNet using the online linear classifier, including all performances for figures~\ref{fig:proj} and~\ref{fig:full_proj}.}
  \label{tab:all-perfs}
  \centering
  \begin{tabular}{llccccc}
    \toprule
     Experiment & Projector  & $256$ & $512$ & $1024$ & $2048$ & $8192$ \\
    \midrule
    \multirow{4}{*}{VICReg} & $d-d-d$ & $61.36$ & $63.50$ & $65.35$ & $66.74$ & $68.13$   \\
                            & $8192-8192-d$ &  $65.01$ & $66.72$ & $68.06$ & $68.07$ & $68.13$  \\
                            & $2048-d$ &  $59.57$ & $64.51$ & $64.64$ & $64.15$ & $64.06$ \\
                            & $2048-2048-d$ & $64.16$ & $65.87$ & $66.68$ & $66.74$ & $66.78$ \\
    \midrule
    \multirow{4}{*}{VICReg-exp} & $d-d-d$ & $61.02$ & $63.90$ & $66.33$ & $66.82$ & $67.93$  \\
                                & $8192-8192-d$ & $65.24$ & $66.71$ & $67.86$ & $68.00$ & $67.93$ \\
                                & $2048-d$ & $61.53$ & $63.59$ & $65.06$ & $64.31$ & $64.34$ \\
                                & $2048-2048-d$ & $64.66$ & $65.77$ & $66.99$ & $66.82$ & $66.88$ \\
    \midrule
    \multirow{4}{*}{VICReg-ctr} & $d-d-d$ & $62.87$ & $64.95$ & $66.21$ & $66.93$ & $67.73$ \\
                                & $8192-8192-d$ & $67.72$ & $67.86$ & $67.84$ & $67.92$ & $67.73$ \\
                                & $2048-d$ & $64.79$ & $64.83$ & $65.00$ & $64.81$ & $64.44$  \\
                                & $2048-2048-d$ & $66.95$ & $66.91$ & $66.98$ & $66.93$ & $67.18$  \\
    \midrule
    \multirow{4}{*}{SimCLR-Orig} & $d-d-d$ & $63.42$ & $64.35$ & $65.05$ & $65.61$ & $66.08$  \\
                            & $8192-8192-d$ & $66.11$ & $66.33$ & $66.00$ & $66.02$ & $66.08$  \\
                            & $2048-d$ & $64.78$ & $64.69$ & $64.83$ & $64.66$ & $64.73$   \\
                            & $2048-2048-d$ &  $65.36$ & $65.75$ & $65.61$ & $65.61$ & $65.70$ \\
    \midrule
    \multirow{4}{*}{SimCLR-Tuned} & $d-d-d$ & $63.42$ & $65.35$ & $66.36$ & $67.62$ & $68.68$  \\
                            & $8192-8192-d$ & $68.45$ & $68.61$ & $68.49$ & $68.48$ & $68.68$  \\
                            & $2048-d$ & $66.13$ & $66.04$ & $66.19$ & $66.42$ & $66.27$   \\
                            & $2048-2048-d$ &  $67.48$ & $67.50$ & $67.62$ & $67.62$ & $67.65$ \\
    \bottomrule
  \end{tabular}
\end{table}

\begin{table}[!h]
  \caption{Hyperparameters used for the results in table~\ref{tab:all-perfs}. Sim., Var. and Cov. indicate the weights of the criteria in VICReg and its variations. $\tau$ indicates the temperature used for LogSumExp-based methods. The hyperparameters for VICReg and SimCLR are usable with the official implementations. For VICReg-exp and VICReg-ctr, the hyperparameters are compatible with the pseudocode in section~\ref{sec:algs}.}
  \label{tab:all-hypers}
  \centering
  \begin{tabular}{llcccccc}
    \toprule
\multirow{2}{*}{Experiment} & \multirow{2}{*}{Projector}  & \multirow{2}{*}{Batch size} & \multirow{2}{*}{base lr}  &    \multicolumn{3}{c}{VICReg} & \multirow{2}{*}{$\tau$} \\
    \cmidrule(r){5-7} 
     &     &  & & Sim. & Var.  & Cov. & \\    \midrule
    \multirow{5}{*}{VICReg} & $d=256$  & $1024$ & $0.3$ & $25$ & $25$ & $4$ &  \\
                            & $d=512$  & $1024$ & $0.3$ & $25$ & $25$ & $2$ &  \\
                            & $d=1024$ & $1024$ & $0.3$ & $25$ & $25$ & $2$ &  \\
                            & $d=2048$ & $1024$ & $0.3$ & $25$ & $25$ & $2$ &  \\
                            & $d=8192$ & $1024$ & $0.3$ & $25$ & $25$ & $0.5$ &  \\

    \midrule
    \multirow{2}{*}{VICReg-exp} & $d \neq 8192$ & $1024$ & $0.5$ & $1$ & $1$ & $2$ & $0.1$ \\
                                & $d = 8192$ & $1024$ & $0.8$ & $1$ & $1$ & $2$ & $0.1$  \\
    \midrule
    \multirow{1}{*}{VICReg-ctr} & All & $1024$ & $0.6$ & $1$ & $1$ & $1$ & $0.15$  \\
    \midrule
    \multirow{1}{*}{SimCLR-Tuned} & All & $2048$ & $0.5$ & & & & $0.15$  \\
    \bottomrule
  \end{tabular}
\end{table}

\clearpage
\section{VICReg variations pseudocode}\label{sec:algs}
\begin{algorithm}[ht]
  \caption{VICReg-exp PyTorch pseudocode.}
  \label{alg:vicreg-exp}
    \definecolor{codeblue}{rgb}{0.25,0.5,0.5}
    \definecolor{codekw}{rgb}{0.85, 0.18, 0.50}
    \newcommand{\algofontsize}{11.0pt}
    \lstset{
      backgroundcolor=\color{white},
      basicstyle=\fontsize{\algofontsize}{\algofontsize}\ttfamily\selectfont,
      columns=fullflexible,
      breaklines=true,
      captionpos=b,
      commentstyle=\fontsize{\algofontsize}{\algofontsize}\color{codeblue},
      keywordstyle=\fontsize{\algofontsize}{\algofontsize}\color{black},
    }
\begin{lstlisting}[language=python]
# f: encoder network,p: projector network, lambda, mu, nu: coefficients of the invariance, variance and covariance losses, N: batch size, D: dimension of the representations, tau: temperature
# mse_loss: Mean square error loss function, relu: ReLU activation function, cut_out_diag: remove the diagonal of a matrix,

for x in loader: # load a batch with N samples
    # two randomly augmented versions of x
    x_a, x_b = augment(x)
    
    # compute embeddings
    k_a = p(f(x_a)) # N x D
    k_b = p(f(x_b)) # N x D
    
    # invariance loss
    sim_loss = mse_loss(k_a, k_b)
    
    # variance loss
    std_k_a = torch.sqrt(k_a.var(dim=0) + 1e-04)
    std_k_b = torch.sqrt(k_b.var(dim=0) + 1e-04)
    std_loss = torch.mean(relu(1 - std_k_a))/2 + torch.mean(relu(1 - std_k_b))/2
    
    # covariance loss
    k_a = k_a - k_a.mean(dim=0)
    k_b = k_b - k_b.mean(dim=0)
    cov_k_a = (k_a.T @ k_a) / (N - 1)
    cov_k_b = (k_b.T @ k_b) / (N - 1)
    cov_loss = torch.logsumexp(cut_out_diag(cov_k_a/tau),1).mean()/2 +
               torch.logsumexp(cut_out_diag(cov_k_b/tau),1).mean()/2

    # loss
    loss = lambda * sim_loss + mu * std_loss + nu * cov_loss

    # optimization step
    loss.backward()
    optimizer.step()
\end{lstlisting}
\end{algorithm}

\begin{algorithm}[ht]
  \caption{VICReg-ctr PyTorch pseudocode.}
  \label{alg:vicreg-ctr}
    \definecolor{codeblue}{rgb}{0.25,0.5,0.5}
    \definecolor{codekw}{rgb}{0.85, 0.18, 0.50}
    \newcommand{\algofontsize}{11.0pt}
    \lstset{
      backgroundcolor=\color{white},
      basicstyle=\fontsize{\algofontsize}{\algofontsize}\ttfamily\selectfont,
      columns=fullflexible,
      breaklines=true,
      captionpos=b,
      commentstyle=\fontsize{\algofontsize}{\algofontsize}\color{codeblue},
      keywordstyle=\fontsize{\algofontsize}{\algofontsize}\color{black},
    }
\begin{lstlisting}[language=python]
# f: encoder network,p: projector network, lambda, mu, nu: coefficients of the invariance, variance and covariance losses, N: batch size, D: dimension of the representations, tau: temperature
# mse_loss: Mean square error loss function, relu: ReLU activation function, cut_out_diag: remove the diagonal of a matrix,

for x in loader: # load a batch with N samples
    # two randomly augmented versions of x
    x_a, x_b = augment(x)
    
    # compute embeddings
    k_a = p(f(x_a)) # N x D
    k_b = p(f(x_b)) # N x D
    
    # invariance loss
    sim_loss = mse_loss(k_a, k_b)
    
    #Make the method contrastive
    k_a = k_a.T
    k_b = k_b.T
    
    # variance loss
    std_k_a = torch.sqrt(k_a.var(dim=0) + 1e-04)
    std_k_b = torch.sqrt(k_b.var(dim=0) + 1e-04)
    std_loss = torch.mean(relu(1 - std_k_a))/2 + torch.mean(relu(1 - std_k_b))/2
    
    # covariance loss
    k_a = k_a - k_a.mean(dim=0)
    k_b = k_b - k_b.mean(dim=0)
    cov_k_a = (k_a.T @ k_a) / (N - 1)
    cov_k_b = (k_b.T @ k_b) / (N - 1)
    cov_loss = torch.logsumexp(cut_out_diag(cov_k_a/tau),1).mean()/2 +
               torch.logsumexp(cut_out_diag(cov_k_b/tau),1).mean()/2

    # loss
    loss = lambda * sim_loss + mu * std_loss + nu * cov_loss

    # optimization step
    loss.backward()
    optimizer.step()
\end{lstlisting}
\end{algorithm}

\end{document}